\documentclass[final,onefignum,onetabnum]{siamonline250211}

\usepackage{lipsum}
\usepackage{amsfonts}
\usepackage{graphicx}

\usepackage{epstopdf}
\usepackage{algorithm}

\ifpdf
  \DeclareGraphicsExtensions{.eps,.pdf,.png,.jpg}
\else
  \DeclareGraphicsExtensions{.eps}
\fi

\usepackage{enumitem}
\setlist[enumerate]{leftmargin=.5in}
\setlist[itemize]{leftmargin=.5in}

\newcommand\flarename{FLARE MCMC}

\newsiamremark{remark}{Remark}
\newsiamremark{hypothesis}{Hypothesis}
\crefname{hypothesis}{Hypothesis}{Hypotheses}
\newsiamthm{claim}{Claim}
\newsiamremark{fact}{Fact}
\crefname{fact}{Fact}{Facts}

\headers{\flarename: Fidelity-based Layer-Adaptive REcursive proposals for MCMC}
{H. Venkatesan, C. Shelton, M. Ho, S. Bird, and M. Wu}

\title{FLARE MCMC: Fidelity-based Layer-Adaptive REcursive proposals for MCMC
\thanks{This is the author's accepted manuscript of an article published in
SIAM Journal on Uncertainty Quantification. The final version is available at \url{https://doi.org/10.1137/25M1795194}.
\funding{This work was funded by the U.S. National Science Foundation (NSF) under Grant No. IIS-2435579. SB was supported by NASA-80NSSC21K1840.}}}

\author{Harini Venkatesan\thanks{Department of Computer Science and Engineering, University of California, Riverside 
  (\email{hvenk001@ucr.edu}, \email{cshelton@cs.ucr.edu}, \email{mwu171@ucr.edu})}
  \and Christian Shelton\footnotemark[1]
\and Ming-Feng Ho\thanks{Leinweber Center for Theoretical Physics, University of Michigan (\email{mfho@umich.edu})}
  \and Simeon Bird\thanks{Department of Physics and Astronomy, University of California, Riverside  (\email{sbird@ucr.edu})}
\and Mengxuan Wu\footnotemark[1]}

\usepackage{amsopn}

\usepackage{blindtext}
\usepackage[american]{babel}
\usepackage{amsmath}
\usepackage{amssymb}
\usepackage{mathtools}
\usepackage{xcolor}
\usepackage{subfig}
\usepackage{xfrac}
\usepackage{booktabs} 
\usepackage{subfig}
\usepackage[noend]{algpseudocode}
\usepackage{stmaryrd}
\usepackage{pgf}
\usepackage{float}
\usepackage{siunitx}
\usepackage{etoolbox}
\usepackage{enumitem}
\usepackage{multirow}

\usepackage{tabularx} 

\usepackage{derivative}
\usepackage{numprint}
\usepackage{wrapfig} 
\usepackage[normalem]{ulem}
\definecolor{figgray}{gray}{0.41}

\usepackage{hyperref}

\usepackage{cleveref}
\newcommand{\capJ}{{\mathchoice{}{}{\scriptscriptstyle}{}J}}
\newcommand\smoothder[1]{\frac{\partial}{\partial \smooth{#1}}}
\newcommand\obj[1]{H_{#1}}  

\newcommand\fulltrans[2]{#1\!\shortrightarrow\!#2}
\newcommand\trans[1]{\fulltrans{#1}{\tilde #1}}

\newcommand\fullprop[3]{q_{#1}(#2| #3)}
\newcommand\prop[1]{\fullprop{}{\tilde #1}{#1}}
\newcommand\propback[1]{\fullprop{}{#1}{\tilde #1}}
\newcommand\proplvl[2]{\fullprop{#1}{\tilde #2}{#2}}
\newcommand\proplvlback[2]{\fullprop{#1}{#2}{\tilde #2}}

\newcommand\transp[4]{p_{#1}^{#2}({#3 \rightarrow #4})}
\newcommand\data{D}

\newcommand\fullpimod[2]{\tilde{\pi}_{#1}(#2)}

\newcommand\fullpi[2]{\pi_{#1}(#2)}

\newcommand\smooth[1]{\omega_{#1}}

\newcommand\psip[2]{\psi_{#1}(#2)}
\newcommand\zetap[2]{\zeta_{#1}(#2)}
\DeclareMathOperator*{\E}{\mathbb{E}}

\usepackage{lastpage}

\ifpdf
\hypersetup{
  pdftitle={\flarename},
  pdfauthor={H, Venkatesan, C. Shelton, M. Ho, S. Bird, M. Wu}
}
\fi

\begin{document}

\maketitle
\begin{abstract}
Markov chain Monte Carlo (MCMC) requires only the ability to evaluate the
likelihood, making it a common technique for inference in complex
models. However, it can have a slow mixing rate, requiring the generation of many samples to obtain good estimates and an overall high
computational cost. \flarename{} is a multi-fidelity layered MCMC method
that exploits lower-fidelity approximations of the true
likelihood calculation to improve mixing and leads to overall faster
performance. Such lower-fidelity likelihoods are commonly available in
scientific and engineering applications where the model involves a
simulation whose resolution or accuracy can be tuned. Our technique uses
recursive, layered chains with simple layer tuning; it does not require the likelihood to take any form or
have any particular internal mathematical structure. We demonstrate
experimentally that \flarename\ achieves larger effective sample sizes for the
same computational time across different scientific domains including
hydrology and cosmology.
\end{abstract}

\begin{keywords}
Markov chain Monte Carlo, multi-layered models, Bayesian inference, simulation-based inference, hydrology, cosmology.
\end{keywords}

\begin{MSCcodes}
62F15, 62M05, 65C05, 65C40, 85A35
\end{MSCcodes}

\section{Introduction}

Markov chain Monte Carlo (MCMC) is a workhorse of scientific and
engineering computation.  Most frequently, it is employed to compute the
posterior distribution of model parameters, based on observations.  The
calculated distributions (as represented by samples) give estimates that
can be used in calibration and uncertainty quantification to aid in the
generation of new scientific experiments, clarify the observability of the model,
and resolve scientific theories.

Among the many MCMC algorithms, Metropolis-Hastings MCMC (MH-MCMC) is popular because of its ability to sample from almost any distribution while requiring only the ability to evaluate the
model's likelihood given a parameter setting. Yet, this is also
its weakness, as it has no additional knowledge of the problem setting to
guide its sampling effectively.  Therefore, its mixing time (speed of
generating effectively new samples) can be slow and the overall
algorithm computationally burdensome.

Methods such as Hamiltonian Monte Carlo and its variants
\cite{Duaetal87,Nea96,HofGel14} speed up mixing by adding auxiliary momentum
variables, allowing longer steps to reduce correlations between consecutive
samples. Such methods require computing the gradient of the log target
distribution with respect to the parameters, something that could be
prohibitively expensive when the distribution is evaluated through lengthy simulation code. For instance, the cosmological simulation we use in our experimental results that aims to approximate the posterior density conditioned on the galaxy power spectrum from SDSS-III Baryon Oscillation Spectroscopic Survey (BOSS) Data \cite{BOSS:2013,BOSS:2017} cannot be modified to produce gradients. Due to the complexity and non-differentiability of the forward cosmological simulation, gradients with respect to the model parameters are not available. Therefore, methods like auto-differentiation cannot be applied, nor is there an analytic form for the gradients, prohibiting the use of gradient-based inference methods.

\flarename{}
speeds up the mixing time
of MH-MCMC by exploiting lower-fidelity models of the same problem. Many
engineering or scientific computational models can be run at multiple
fidelities. 
\flarename{} exploits a set of computationally cheaper posterior calculations, each an approximation of the true posterior.  Many  posteriors involve solving a PDE, ODE, or integral.  For these, coarsening the spatial or temporal grid leads to cheaper approximations.  For those with constraint or optimization solvers, reducing the solvers' tolerances or maximum number of iterations can similarly lead to cheaper approximations.  We further show a physics example where the underlying simulation can be coarsened by reducing the number of representative particles.

By recursively employing MCMC chains, we can use the coarser resolution models
to guide the higher resolution MCMC chain.  The result is a sampler for
the target model that converges faster and generates
more effective samples per computation time, even considering the extra
time necessary to employ the lower-fidelity computations.

\section{Background}

Markov chain Monte Carlo is a class of algorithms designed to sample from a complicated target distribution
by constructing an easy-to-simulate Markov chain such that the stationary distribution of
the Markov chain is the target distribution.
Commonly, this target distribution is the posterior distribution of a set of parameters, conditioned
on observations.  Let $\data$ be the observations
and $\theta \in \Theta \subset \mathbb{R}^R$ be the parameters.
Assuming a prior distribution on the parameters $p(\theta)$,
the target posterior distribution of interest, $\pi(\theta\mid\data)$, is obtained through Bayes' theorem:
\begin{equation}
\label{eqn:bayes}
	\pi(\theta\mid\data) = \frac{\mathcal{L}(\data\mid \theta)p(\theta)}{p(\data)} \propto \mathcal{L}(\data\mid \theta)p(\theta)
\end{equation}
where $\mathcal{L}(\data\mid \theta)$ is the likelihood of the data, which in many scientific
applications requires a lengthy simulation to evaluate.
We only require the ability to evaluate $\pi(\theta\mid\data)$ up to a constant of proportionality,
and therefore the denominator of $p(\data)$ is safely ignored.
That $\pi(\theta\mid\data)$ is a conditional distribution is largely irrelevant for MCMC, so we will
just let $\pi(\theta)$ denote the distribution of interest (equal to $\pi(\theta\mid\data)$ if the 
underlying distribution is a posterior, but it could be any distribution over $\theta$).

\subsection{Metropolis-Hastings MCMC}
We begin by focusing on the Metropolis-Hastings method for Markov Chain Monte Carlo (MH-MCMC) introduced by Hastings (1970) \cite{MHMCMC1}.
The $(i\!\!+\!\!1)$th sample, $\theta^{i+1}$, is generated based on the previous sample in the
chain, $\theta^i$, in a two-step process.
First, a proposed next state, $\tilde\theta^i$ is generated from a proposal distribution,
$\prop{\theta^i}$.  
Then, $\tilde\theta^i$ is either accepted or rejected as $\theta^{i+1}$
according to a carefully constructed acceptance probability.  If accepted,
$\theta^{i+1}\!=\!\tilde\theta^i$, otherwise $\theta^{i+1}\!=\!\theta^i$.
Often, a normal distribution centered at $\theta^i$ is used as
the proposal distribution $\prop{\theta^i}$, but almost any proposal distribution can be used,
subject to mild conditions (for instance, that $\prop{\theta^i}$ is positive everywhere).  With a chosen $\prop{\theta^i}$,
the acceptance probability, $\mathcal{A}$, for the transition $\trans{\theta^i}$ is
\begin{equation}
\label{eqn:accept}
	\mathcal{A}(\trans{\theta^i}) = \min(1,r(\trans{\theta^i}))
\end{equation}
where
\begin{equation}
\label{eqn:rmin}
	r(\trans{\theta^i}) = \frac{\pi(\tilde{\theta}^i)}{\pi(\theta^i)}\frac{\propback{\theta^i}}
							{\prop{\theta^i}}\,\,.
\end{equation}

Although the standard Metropolis-Hastings MCMC algorithm can be an easy way to
sample from a posterior distribution, it requires sufficient samples to be
an effective approximation of the posterior distribution.  When the chain
is slow to mix (due to a less-than-optimal proposal distribution),
consecutive samples are highly dependent and more samples must be taken to
achieve a set representative of the true distribution.  When the evaluation
of $\pi(\theta^i)$ (necessary for the calculation of
Equation~\ref{eqn:rmin}) is computationally expensive, this is particularly
problematic.

\subsection{Related Work} 

Our goal of accelerating MCMC sampling is shared by a large body of work. These approaches involve methods that couple chains (like simulated tempering), methods that aim to reduce the variance of estimators for a target using cheap approximations from multiple fidelities (like MLMC), and methods that use cheap models to build MCMC proposals.

Like \flarename,
methods such as simulated tempering and coupled MCMC \cite{MCSimulation, Marinari, Altekar_Dwarkadas_Huelsenbeck_Ronquist_2004} use multiple chains.  Samples are accepted or rejected by evaluating the energy of the process and adjusting the temperature of the model. Two chains are run in parallel at different temperatures, and the system swaps between different temperatures. Reversible jump MCMC \cite{10.1093/biomet/82.4.711, Al-Awadhi} also jumps between chains (of different dimensions). While \flarename\ shares the notion of multiple chains, because it solves a different problem (to take advantage of simulations that are orders of magnitude cheaper to evaluate), the resulting structure is very different. 
Methods such as sequential MCMC or particle filtering \cite{smc, Doucet2001} use the notion of approximations of the target by a large number of samples called particles that are propagated across time using importance sampling. However, those are filtering frameworks and do not converge to a stationary distribution. Thus, though appearing related in its structure, \flarename\ is quite different to these methods. 

A highly influential body of work focuses on reducing the variance of the final estimator for a target expectation in the Multilevel Monte Carlo (MLMC) framework. Taking inspiration from the multilevel Monte Carlo method \cite{MLMC} for high-dimensional, parameter-dependent integrals and Multilevel Monte Carlo Path Simulation \cite{Giles-MLMC}, Hoang et al. \cite{hoang2013complexity} proposed a multilevel MCMC method that applied to Bayesian Inverse problems.

The core idea is to decompose a high-fidelity expectation into a telescoping sum using a hierarchy of computational models with increasing model resolution. This method achieves computational speedup by estimating the low variance difference terms with small number of samples, while the bulk of the computational efforts is spent on the cheap, low-fidelity estimator. The authors also provide rigorous complexity proof, showing how quickly posterior expectation might converge when running iterative samplers on sparse grids using telescopic expansion on the discretization error. 

This MLMC framework has been extended in many directions. Multilevel sequential Monte Carlo samplers \cite{beskos2017multilevel} and Multilevel Particle Filters \cite{jasra2017multilevel} along with previous work \cite{hoel2016multilevel, gregory2016multilevel, gregory2017seamless} extended MLHC to sequential Monte Carlo. Jasra et al. \cite{jasra2018bayesian} extended MLHC to the problem of static parameter estimation in partially observed diffusions. Problems with multiple ways of discretizing were addressed by Multi-Index MCMC \cite{haji2016multi, jasra2018multi}. These MLMC methods use samples from all chains in a telescoping estimator. They target the MSE of a specific quantity of interest.  In contrast, FLARE MCMC uses only samples from the finest chain (like methods discussed below) and targets the chain's mixing time, rather than MSE. Our theoretical analyses in this paper focus on the ergodicity and convergence rates for FLARE MCMC and are not specific to any particular problem domain.

More similar to \flarename, several previous methods have shown that replacing the proposal with an
approximation with generally high acceptance probability reduces the
computational cost of the standard Metropolis-Hastings algorithm
significantly. This idea was first proposed by Christen and Fox \cite{DAMCMC, Fox1997SamplingCI} as a two-stage
MCMC method that tests the original proposal using a cheap approximation to
find moves in the chain that are more likely to be accepted. In other
words, a candidate is accepted with the likelihood of the approximate model
before it is evaluated with the more expensive model. In preconditioned MCMC using coarse-scale simulation proposed by Efendiev, Hou and Luo  \cite{doi:10.1137/050628568}, two-stages are used to reduce the computational cost incurred in the fine fidelity by testing the coarse model based on high-fidelity multiscale finite volume model. However, this only
performs a single check with a cheap approximation, and does not exploit it to run a full MCMC subchain.

Multilevel Markov chain Monte Carlo (MLMCMC) \cite{MLMCMC} achieves computationl efficiency on the finer levels. If the coarse proposal from the approximation is rejected by the fine level, the coarse
chain continues independently of the fine chain instead of recursively
starting the next coarse chain from the current sample of the fine chain.
MLMCMC uses a user-specified variable that is internal to the likelihood
computation and shared across the levels (for instance, the predicted
observations to be compared with the true observations through a noise
model).
The samples drawn from the coarse approximation are
used to reduce the variance of this internal variable achieving
better proposals from the coarse fidelities.

Lykkegaard et al. \cite{doi:10.1137/22M1476770} proposed Adaptive Multilevel Delayed Acceptance (MLDA),
which adapted a recursive version of \mbox{MLMCMC} over multiple levels. Here, the
coarse inner subchain used to generate subsequent proposals for the current chain is initiated from the current sample from the outer chain again instead of independently continuing the fine chain even if coarse proposal is
rejected. MLDA also applies an Adaptive Error Model (AEM) \cite{KAIPIO2007493} to account for
discrepancies between the different fidelities.  It takes the two-level AEM from Adaptive Delayed Acceptance Metropolis Hastings
\cite{AEM, AEM-summary} and extends it by adding a telescoping sum of differences in the model output across multiple levels.

Several multilevel MCMC methods based on delayed rejection, in contrast to delayed acceptance, have
also been proposed and are summarized by Peherstorfer et al. \cite{multifidelitysurvey}. Adaptive methods in multistage MCMC \cite{Tierney1999SomeAM} proposed using an independence sampler that
is a good approximation for the posterior distribution in the
first stage and random walk in the second stage to help with poor approximation by the independence sampler. Delayed rejection in MCMC \cite{reversiblejumpreject} suggested using a normal distribution as the proposal
in the first level and a normal distribution with the same
mean but higher variance in the second level. Higdon et al. \cite{978393} proposed using multiple MCMC chains from low and high fidelities and coupling them using a product chain and "swapping" updates allowing information to move between the two fidelity scales. An accelerated MCMC method using local approximations was developed by Conrad et al. \cite{Conrad_2016} that uses local approximations of either the log-likelihood function or the forward model of different simulations into the Metropolis-Hastings kernel. 
Although
these methods use approximations as proposals, they do not
exploit layered or recursive MCMC chains.

Cai and Adams \cite{NEURIPS2022_8803b9ae} proposed a multi-fidelity Monte Carlo method (MFMC) that uses randomized fidelities as the approximate for the target fidelity.
The algorithm does not converge to the true posterior, but the resulting samples can be used to estimate expectations through a specific ``sign-correction'' formula.  
Our method follows a hierarchy of levels in its sampling while also sampling from the true posterior and provides a simpler alternative to previous multilevel methods.   

Our multi-fidelity layered MCMC algorithm, \flarename, has a similar
structure to MLDA in terms of the recursive layers and achieves a similar
amount of effective samples across multiple chains of MLDA.  However, our
method for mitigating the differences between approximations is simpler in construction and implementation than that of MLDA,
does not require the identification of any internal variables of the distribution to
be sampled, and generates more effective samples in a shorter amount of
time and computational cost.  We demonstrate this on real-world large
scientific problems. We also show theoretical convergence rates, optimal value for the number of inner steps $M$ and prove ergodicity of the adaptation in layer tuning. 

\section{\flarename: Fidelity-based Layer-Adaptive REcursive proposals for MCMC}

We consider a series of models, ordered by fidelity.  For instance, we
might have a model that evaluates a differential equation
numerically as the main part of the likelihood calculation (simulating
forward in time); 
the resolution of the spatial or temporal grid used to evaluate the model can
be tuned to change its fidelity.  The highest fidelity model is our
``true'' model, from whose posterior we wish to sample. 
\flarename{} draws samples from the true model.
Its nested chains
use the coarser fidelity models as cheap approximations of this finest
fidelity model to speed up mixing.

Where the standard Metropolis-Hastings algorithm uses a distribution $q$
that proposes the next sample, 
\flarename\ uses nested Markov chains as the proposal
distribution. In a recursive fashion, each layer uses the result of
another MCMC chain with a coarser approximation as its proposal. The
recent sample in the current chain is the starting sample in the nested
chain. The coarser chain runs for $M$ iterations, with
each proposed sample evaluated by the likelihood of the cheaper 
layer. The last, $M$th, sample of the coarser chain is proposed as the
candidate for the next sample in the current chain.  At the coarsest
fidelity/layer, a standard proposal distribution is used, for instance a
normal distribution centered on the current point.

This avoids numerous expensive likelihood calculations in the fine fidelity
that might end up rejected, and it allows the proposal to generate
samples that are more likely to get accepted by the finest fidelity, since
it was accepted by an approximation already.  While the coarser chains have
their own computational cost, they can often be orders of magnitude faster
to evaluate, thus leading to an overall savings in the running time of the
entire algorithm, as measured by the quality of the samples generated per
computational time.

\begin{figure*}
	{\centering \resizebox{\textwidth}{!}{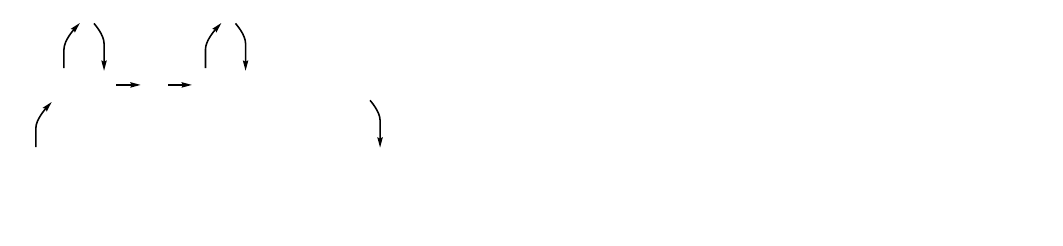}

	}
	\caption{One sampling step from the finest layer 
	with two coarse fidelities 
	and two iterations per nested chain. Refer to text in Section \ref{sec:alg-section}. 
	}
	\label{fig:scheme}
\end{figure*}

\subsection{Algorithm Specification}
\label{sec:alg-section}
Let $\theta \in \mathbb{R}^R$ be the set of
parameters (over which we are sampling) and let $j \in \{ 0,1,...,J\}$ be the
fidelities ordered in a decreasingly complex fashion ($0$ is the 
``true'' model and $J$ is the coarsest fidelity).  We let $\fullpi{j}{\theta}$
be the posterior distribution according to the $j$th fidelity model and
$\proplvl{j}{\theta}$ be the proposal distribution for layer $j$. Here, 
$\theta_j^i$ is the $i$th sample in the current chain at layer $j$.
The goal is to sample from $\fullpi{0}{\theta}$.

\algnewcommand{\IIf}[1]{\State\algorithmicif\ #1\ \algorithmicthen}
\algnewcommand{\ElseIIf}[1]{\algorithmicelse\ #1}
\algnewcommand{\EndIIf}{}

\newcommand\helpname{FLARE-Chain}

\begin{algorithm}
\caption{\helpname ($\theta^0_j$,n,$j$)}\label{alg:MCMCMCreal}
\begin{algorithmic}
\For{$i= 0, \dots, n-1$}
    \If{$j=J$} \hfill\Comment{coarsest layer}
        \State{Sample $\tilde\theta^i_j$ from $\fullprop{j}{\cdot}{\theta^i_j}$}
        \State{Accept $\theta^{i+1}_j = \tilde{\theta}^i_j $ with probability $\mathcal{A}$ from Equation \ref{eqn:accept}}
        \State{Otherwise, reject and $\theta^{i+1}_j = \theta^i_j$}
    \Else  \State{${
    \theta^1_{j+1},\dots,}\theta^{M}_{j+1} = $ \helpname($\theta^{i}_j$, M, $j\!+\!1$)}
        \State{$\tilde\theta^i_j = \theta^M_{j+1}$}
        \State{Accept $\theta^{i+1}_j = \tilde{\theta}^i_j $ with probability $\mathcal{A}_j$ from Equation \ref{eqn:newaccept}}
        \State{Otherwise, reject and $\theta^{i+1}_j = \theta^i_j$}
    \EndIf
\EndFor
\State{\Return $\theta^1_{j}, \dots, \theta^n_{j}$}
\end{algorithmic}
\end{algorithm}

In \flarename, the proposal distribution
$\proplvl{j}{\theta_j^i}$ for iteration $i$ of a chain at 
layer $j$
is another MCMC chain of $M$ steps targeting the (coarser)
posterior $\fullpi{j+1}{\cdot}$, starting this nested chain at $\theta_j^i$. The 
result of $M$ steps using a chain with stationary distribution $\fullpi{j+1}{\cdot}$ is the 
proposal for $\tilde{\theta}_j^i$: $\proplvl{j}{\theta_j^i}$.
More algorithmically, to generate $\tilde\theta^i_j$ from $\theta^i_j$, we run
the (coarser) MCMC algorithm
at layer $j+1$.  We start with $\theta^0_{j+1} = \theta^i_j$ and continue
the coarser MCMC sampler
until $\theta^M_{j+1}$.  We then set $\tilde\theta^i_j = \theta^M_{j+1}$.
At the coarsest layer, $\proplvl{J}{\theta^i}$ is a standard
simple proposal distribution.
Figure~\ref{fig:scheme} pictorially demonstrates this for $J=2$ inner layers, each with $M=2$ steps.

With the sampling scheme so defined, it remains to construct the acceptance probability
for each layer: $\mathcal{A}_0, \mathcal{A}_1, \dots, \mathcal{A}_J$.
We follow a standard Metropolis-Hastings method for every layer and therefore
$\mathcal{A}_j = \min(1,r_j(\trans{\theta_j^i}))$.  At the coarsest layer,
the ratio $r_J(\trans{\theta_J^i})$ is just as in Equation~\ref{eqn:rmin}
because $q_J$ is a standard proposal distribution.

When $j<J$, the proposal distribution is from a Markov chain that obeys detailed balance.  Therefore
\begin{equation}
\frac{\proplvlback{j+1}{\theta_j^i}}{\proplvl{j+1}{\theta_j^i}} =
\frac{\fullpi{j+1}{\theta_j^i}}{\fullpi{j+1}{\tilde\theta_j^i}}\quad\quad 0\le j<J
\end{equation}
and thus
\begin{equation}
\label{eqn:newaccept}
	\mathcal{A}_j(\trans{\theta^i_j}) = \min\left(1,
	\frac{\fullpi{j}{\tilde\theta^i_j}}{\fullpi{j}{\theta^i_j}} \cdot
	\frac{\fullpi{j+1}{\theta^i_j}}{\fullpi{j+1}{\tilde\theta^i_j}}
	\right)\,\,.
\end{equation}
Note this equation does not depend on $M$ (the number of steps for the
coarser chain at layer $j+1$).  While this chain has almost certainly
not mixed for small $M$, the ratio
$\sfrac{\proplvlback{j}{\theta_j^i}}{\proplvl{j}{\theta_j^i}}$ is the same
as if the chain had completely mixed and the proposed new state,
$\tilde\theta_j^i$, were from the true posterior of the model at layer $j+1$.  The values $\fullpi{j+1}{\tilde\theta^i_{j}}$ and
$\fullpi{j+1}{\theta^i_{j}}$ were already calculated as part of the
chain at layer $j+1$ and therefore do not take any additional computation
time.
\flarename\ is summarized in Algorithm \ref{alg:MCMCMCreal}. To gather $N$
samples from the true posterior, the algorithm is called with 
\helpname$(\theta^0, N, J = 0)$.

\subsection{Convergence Rate}
We show a convergence rate for \flarename{}. We measure the distance to the stationary in terms of total variation distance as follows.

\begin{definition}[Strasser (1985) \cite{Strasser+1985}]
\label{def:tv-defn}
The total variation distance between two probability measures $\nu_1$ and $\nu_2$ is defined as
\[
\|\nu_1 - \nu_2\| = \sup_{A} |\nu_1(A) - \nu_2(A)|.
\]
\end{definition}

The minorization condition of Markov chains, used by Roberts and Rosenthal \cite{Roberts_2004}, provides a means of bounding the convergence rate.
For any Markov chain with a one step transition probability of $\transp{}{}{\theta^i}{\theta^{i+1}}$, we let $\transp{}{n}{\theta^i}{\theta^{i+n}}$ denote the corresponding $n$ step transition probability. Formally, the definition of the minorization condition is stated below.

\begin{definition}[Roberts and Rosenthal (2004) \cite{Roberts_2004}]
\label{def:minorization-cond}
    A Markov chain on $\Theta$ satisfies the minorization condition if there exists an $\epsilon>0$, a positive integer $n$, and a probability measure $\nu(.)$ such that 
    \begin{equation}
         \transp{}{n}{\theta^0}{\theta^{n}} \geq \epsilon \nu(\theta^{n}) \quad\quad  \forall \theta^0, \theta^{n} \in \Theta.
    \end{equation}
\end{definition}

 With respect to the stationary distribution, the probability of transitioning from $\theta$ to $\theta^\prime$ can be minorized by a lower bound such that $\transp{j}{1}{\theta}{\theta^\prime} \geq \epsilon \fullpi{j}{\theta^\prime}$.

\begin{lemma}
\label{lem:minlowerbound}
Assume the minorization condition holds at the innermost level ($j = J$):
$\transp{\capJ}{1}{\theta^i_{\capJ}}{\theta^{i+1}_{\capJ}} \geq \xi_J \cdot \fullpi{\capJ}{\theta^{i+1}_J}$
for some $\xi_{\capJ} > 0$. Then, there exists a minorized lower bound on levels $j < J$ such that \begin{equation}
    \transp{j}{1}{\theta^i_j}{\theta^{i+1}_j} \geq \xi_j \cdot \fullpi{j}{\theta^{i+1}_j}
    \quad\quad\forall \theta^i_j, \theta^{i+1}_j
\end{equation}
where $\xi_j = (1 - (1 - \xi_{j+1})^M) \cdot \min\limits_{\theta} \left( \frac{\pi_{j+1}(\theta)}{\pi_j(\theta)} \right)$.  
\end{lemma}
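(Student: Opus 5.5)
We argue by downward induction on $j$, from $j=J-1$ to $j=0$. The base case $j=J$ is the hypothesis. For the inductive step we assume $\transp{j+1}{1}{\theta}{\theta'} \geq \xi_{j+1}\fullpi{j+1}{\theta'}$ for all $\theta,\theta'$. We then split the analysis of one layer-$j$ step into two parts: the proposal produced by $M$ steps of the layer-$(j+1)$ chain, and the Metropolis--Hastings correction with acceptance $\mathcal{A}_j$ from Equation~\ref{eqn:newaccept}.

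\textbf{Step 1: minorize the proposal.} From the one-step bound at layer $j+1$, write the kernel as a mixture
\[
p_{j+1} = \xi_{j+1}\,\pi_{j+1} + (1-\xi_{j+1})R_{j+1},
\]
where $R_{j+1}$ is a Markov kernel. Iterating this $M$ times, every path containing at least one ``regeneration'' draw from $\pi_{j+1}$ ends in distribution $\pi_{j+1}$, because $\pi_{j+1}$ is stationary for $p_{j+1}$. The probability of at least one regeneration is $1-(1-\xi_{j+1})^M$. Hence
\[
\proplvl{j}{\theta} = \transp{j+1}{M}{\theta}{\tilde\theta} \geq \bigl(1-(1-\xi_{j+1})^M\bigr)\,\fullpi{j+1}{\tilde\theta}.
\]
This coupling argument is the main obstacle. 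In it I must check that $R_{j+1}$ is a genuine sub-stochastic remainder and that stationarity of $\pi_{j+1}$ under $p_{j+1}$ really carries the regenerated mass forward. Stationarity holds by detailed balance of the inner chain, which the paper already uses.

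\textbf{Step 2: incorporate acceptance.} For $\theta'\neq\theta$, the transition density is the proposal density times $\mathcal{A}_j$. Using $\min(1,a)\,b=\min(b,ab)$, we get
\[
\proplvl{j}{\theta}\,\mathcal{A}_j \geq c\,\min\!\Bigl(\fullpi{j+1}{\theta'},\ \fullpi{j}{\theta'}\tfrac{\fullpi{j+1}{\theta}}{\fullpi{j}{\theta}}\Bigr),
\qquad c = 1-(1-\xi_{j+1})^M .
\]
Writing $\kappa=\min_\theta \pi_{j+1}(\theta)/\pi_j(\theta)$, both terms are at least $\kappa\,\fullpi{j}{\theta'}$: the first because $\pi_{j+1}(\theta')\ge\kappa\pi_j(\theta')$, the second because $\pi_{j+1}(\theta)/\pi_j(\theta)\ge\kappa$. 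This gives $\transp{j}{1}{\theta}{\theta'} \ge c\kappa\,\fullpi{j}{\theta'} = \xi_j\,\fullpi{j}{\theta'}$.

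For the diagonal $\theta'=\theta$, the rejection mass only adds a nonnegative atom, so the bound persists in the measure sense: the absolutely continuous part alone already dominates $\xi_j\pi_j$. I would state this interpretation explicitly.

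Finally, I would note that $\kappa\le 1$ whenever both densities are normalized, so $\xi_j\in(0,1]$. This requires $\kappa>0$, i.e.\ $\pi_{j+1}$ does not vanish where $\pi_j$ is positive, which is implicit in the statement. The induction then closes.
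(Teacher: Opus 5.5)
Your proposal is correct and follows essentially the same route as the paper: Step 1 is the paper's auxiliary $M$-step lemma (proved there by induction on the number of steps, splitting the $k$-step kernel into a $\pi_{j+1}$ component of weight $\phi^k \ge 1-(1-\xi_{j+1})^k$ plus a residual, which is exactly your regeneration picture), and Step 2 is the paper's computation that drops the rejection atom and uses $\min\bigl(1, r(\theta)/r(\theta')\bigr)\, r(\theta') = \min\bigl(r(\theta), r(\theta')\bigr)$ with $r = \pi_{j+1}/\pi_j$. Your write-up is slightly more careful than the paper's, since it makes explicit both the downward induction over $j$ and the measure-sense treatment of the diagonal atom, which the paper leaves implicit.
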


 We call $\xi_j$ the minorization constant for level $j$.
 This satisfies the necessary minorization condition of Theorem 8 in the original paper \cite{Roberts_2004}. This allows us to get a quantitative bound on the distance to the stationary of every level as stated in Theorem \ref{thm: conv-thm}. The proof for Lemma \ref{lem:minlowerbound} can be found in Appendix \ref{proof:app-conv-rates}.

\begin{theorem}
\label{thm: conv-thm}
    Let $\transp{j}{n}{\theta^0_j}{\cdot}$ be the distribution for layer $j$ with an invariant target probability $\pi_j(\cdot)$. \flarename\ is uniformly ergodic
    and converges as $\left\|\transp{j}{n}{\theta^0_j}{\cdot}  - \fullpi{j}{\cdot} \right\| \leq (1-\xi_j)^n$ where $\xi_j = (1 - (1 - \xi_{j+1})^M) \cdot \min\limits_{\theta} \left( \frac{\pi_{j+1}(\theta)}{\pi_j(\theta)} \right)$ as in Lemma \ref{lem:minlowerbound}. Most critically, it holds for layer $j = 0$. 
\end{theorem}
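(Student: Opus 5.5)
The plan is to combine Lemma~\ref{lem:minlowerbound} with the standard coupling argument behind Theorem~8 of Roberts and Rosenthal~\cite{Roberts_2004}, applied separately at each layer. Lemma~\ref{lem:minlowerbound} holds for every $j<J$, and the base case is the hypothesis at $j=J$. So for every layer $j$ the one-step kernel of the layer-$j$ chain satisfies
\[
p_j^1(\theta\rightarrow\cdot)\ \geq\ \xi_j\,\pi_j(\cdot)\qquad\forall\theta .
\]
This is the minorization condition of Definition~\ref{def:minorization-cond} with $n=1$, $\epsilon=\xi_j$ and $\nu=\pi_j$. Before using it, I will check two things: that $\pi_j$ is actually invariant for the layer-$j$ kernel, and that $\xi_j\in(0,1]$.

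For invariance, I argue by induction from the coarsest layer upward. At layer $J$ the kernel is ordinary Metropolis--Hastings and hence reversible with respect to $\pi_J$. Suppose the layer-$(j+1)$ kernel is reversible with respect to $\pi_{j+1}$. Then the $M$-step kernel $q_j$ is also $\pi_{j+1}$-reversible, because
\[
\pi_{j+1}(\theta)\,q_j(\tilde\theta\mid\theta)=\pi_{j+1}(\tilde\theta)\,q_j(\theta\mid\tilde\theta).
\]
Substituting this into the Metropolis--Hastings ratio gives exactly Equation~\ref{eqn:newaccept}, so the layer-$j$ kernel is reversible with respect to $\pi_j$.

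For the range of $\xi_j$, positivity follows inductively from $\xi_{j+1}>0$ together with positivity of $\min_\theta \pi_{j+1}/\pi_j$, which the lemma implicitly assumes. The bound $\xi_j\le 1$ comes from integrating the minorization over $\Theta$: $1\ge \xi_j\int\pi_j=\xi_j$.

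With these facts in place, the convergence bound follows by a splitting argument. Write the kernel as
\[
P_j=\xi_j\,\Pi_j+(1-\xi_j)R_j ,
\]
where $\Pi_j(\theta,\cdot)=\pi_j(\cdot)$ and $R_j=(P_j-\xi_j\Pi_j)/(1-\xi_j)$ is a genuine Markov kernel; it is nonnegative by the minorization. If $\xi_j=1$ the chain reaches $\pi_j$ after one step and the bound is trivial. Invariance of $\pi_j$ gives $\pi_j P_j=\pi_j$, and applying this to the decomposition yields $\pi_j R_j=\pi_j$. Hence for any initial distribution $\mu$,
\[
\mu P_j-\pi_j=(1-\xi_j)(\mu-\pi_j)R_j .
\]
Iterating this identity, and using that $R_j$ is a contraction in total variation, gives
\[
\|\delta_{\theta^0_j}P_j^n-\pi_j\|\le(1-\xi_j)^n\,\|\delta_{\theta^0_j}-\pi_j\|\le(1-\xi_j)^n .
\]
Alternatively, one can simply cite Theorem~8 of \cite{Roberts_2004}, whose hypothesis is exactly what was verified above. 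The bound is uniform in $\theta^0_j$, so the layer-$j$ chain is uniformly ergodic; taking $j=0$ gives the claim for \flarename's output chain targeting $\pi_0$.

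The main obstacle is making sure that the object being minorized is a fixed, time-homogeneous Markov kernel at level $j$. The proposal at level $j$ is produced by running a nested chain, so the level-$j$ transition must be understood as the composite kernel: $M$ steps of $p_{j+1}$, followed by the acceptance step of Equation~\ref{eqn:newaccept}. Lemma~\ref{lem:minlowerbound} is stated for exactly this kernel. I also need to handle the atom at the current state correctly; a rejection only adds mass, so the lower bound still holds. Finally, the quantity $\min_\theta \pi_{j+1}(\theta)/\pi_j(\theta)$ must be strictly positive, since otherwise the bound is vacuous. I would state that as the standing assumption under which the theorem is meaningful.
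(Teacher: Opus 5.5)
Your proposal is correct and takes the same route as the paper. The paper's proof just invokes the one-step minorization $p_j^1(\theta\rightarrow\cdot)\ge\xi_j\pi_j(\cdot)$ from Lemma~\ref{lem:minlowerbound} and appeals to Theorem~8 of Roberts and Rosenthal; your inductive check that $\pi_j$ is invariant (via $\pi_{j+1}$-reversibility of the $M$-step proposal) and your splitting $P_j=\xi_j\Pi_j+(1-\xi_j)R_j$ make explicit what the paper leaves to that citation.
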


\begin{proof}
    The results follow from the minorization condition established in Lemma \ref{lem:minlowerbound}. 
\end{proof}

Although the theorem above establishes the convergence of the chain to the invariant target distribution, it illustrates several aspects about the inner chains. In particular, ergodicity requires that the chain be able to reach all regions of the target's support. Therefore, the coarser approximations' supports must be supersets of the finer ones.
The coupling strength, $\xi_j$, in turn depends on the coupling strength of the coarser
approximations' chains, $\xi_k, k>j$.  Thus, the effects of the mixing times of the inner
chains on the outer chain are captured in this theorem.

The similarities of the approximations to each other are captured in the $\min_\theta\left(\frac{\pi_{j+1}(\theta)}{\pi_j(\theta)} \right)$ terms. 
Thus, the theorem also quantifies
the effects of similarities and dissimilarities between the approximations on the total convergence rate.  If the modes of the target distribution are preserved across coarsening, then
we would expect these terms to be larger and therefore the outer chain to mix faster.
Ideally, the minimum in this term could be replaced with an expectation, thus turning it
into the KL-divergence between adjacent layers.  We have not yet determined whether or how this
might be possible.

\subsection{Optimal number of inner steps $M$}

To understand how to select the number of inner steps for each layer $M_j$, we derive a theoretical expression for optimal $M_j$ that balances the decoupling rate of the chains and the cost of likelihood evaluations. The following lemma provides an analytical expression for the value of $M_j$ that maximizes the cost-aware decoupling rate. The proof for Lemma \ref{lem:optimal_M} can be found in Appendix \ref{proof:app-optM}. 

\begin{lemma}
\label{lem:optimal_M}

For layers $0 \leq j < J$, suppose the minorization condition hold such that 
$\transp{j}{1}{\theta^i_j}{\theta^{i+1}_j} \geq \xi_j \cdot \fullpi{j}{\theta^{i+1}_j}$,
where $\xi_j = \left(1 - (1 - \xi_{j+1})^{M_j} \right) \cdot \min\limits_{\theta} \left( \frac{\pi_{j+1}(\theta)}{\pi_j(\theta)} \right)$ is the minorization constant, and $M_j \geq 0$ is the number of inner steps. 
Let the total cost per step at level $j$ be $B_j = b_j + M_j \cdot B_{j+1}$ where $b_j > 0$ is the cost of evaluating the likelihood at level $j$ and $B_{j+1} > 0$ is the cost of a single evaluation of the inner layer $j+1$. Define the cost-aware computational decoupling rate as
\begin{equation}
    f(M_j) = \frac{ \left(1 - (1 - \xi_{j+1})^{M_j} \right) \cdot \min\limits_{\theta} \left( \frac{\pi_{j+1}(\theta)}{\pi_j(\theta)} \right)}{B_j}.
\end{equation}

    Then the real-valued maximizer $M_j^*$ of $f(M_j)$ is 
    \begin{equation}
            M_j^* = -\frac{1}{\Upsilon}W_{-1}\big(-e^{-\Upsilon\mu}\big) - \mu
    \end{equation}

    where $\Upsilon = -\log(1-\xi_{j+1})$, $\mu = \frac{b_j}{B_{j+1}} + \frac{1}{\Upsilon}$ and $W_{-1}$ is $-1$ branch of the Lambert $W$ function.
\end{lemma}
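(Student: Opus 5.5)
The plan is to reduce $f$ to a one-variable calculus problem, solve the first-order condition in closed form with the Lambert $W$ function, and then pick the correct branch. First I drop the positive constant $\min_\theta \pi_{j+1}(\theta)/\pi_j(\theta)$, since it does not affect the maximizer. Next I write $(1-\xi_{j+1})^{M_j} = e^{-\Upsilon M_j}$ with $\Upsilon = -\log(1-\xi_{j+1})>0$, which requires $0<\xi_{j+1}<1$. Finally I divide the denominator by $B_{j+1}$ and set $\beta = b_j/B_{j+1}>0$. The problem is then to maximize
\[
g(M) = \frac{1-e^{-\Upsilon M}}{\beta + M}, \qquad M\ge 0 .
\]

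Differentiating, the sign of $g'(M)$ equals the sign of
\[
N(M) = \Upsilon e^{-\Upsilon M}(\beta+M) - \bigl(1-e^{-\Upsilon M}\bigr).
\]
A short computation gives $N'(M) = -\Upsilon^2(\beta+M)e^{-\Upsilon M}<0$. Also $N(0)=\Upsilon\beta>0$ and $N(M)\to -1$ as $M\to\infty$. Hence $N$ has exactly one zero $M^*>0$, with $g$ increasing before it and decreasing after it, so $M^*$ is the unique global maximizer on $[0,\infty)$. This step settles existence and uniqueness before any explicit formula is needed.

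To solve $N(M)=0$, I rewrite it as $e^{-\Upsilon M}\bigl(\Upsilon(\beta+M)+1\bigr)=1$. With $\mu = \beta + 1/\Upsilon$ this becomes $\Upsilon(M+\mu)e^{-\Upsilon M}=1$. I then substitute $y=M+\mu$ and $z=-\Upsilon y$, which turns the equation into
\[
z e^{z} = -e^{-\Upsilon\mu}.
\]
Therefore $z = W(-e^{-\Upsilon\mu})$ for some branch of $W$, and
\[
M = -\frac{1}{\Upsilon}W\bigl(-e^{-\Upsilon\mu}\bigr) - \mu,
\]
which is the stated form once the branch is fixed.

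The main obstacle is choosing the branch, which I handle as follows. Because $\Upsilon\mu = \Upsilon\beta + 1 > 1$, the argument $-e^{-\Upsilon\mu}$ lies in $(-1/e,0)$, so both real branches $W_0$ and $W_{-1}$ are defined.
\begin{itemize}
\item The $W_0$ branch gives $z\in(-1,0)$, hence $y<1/\Upsilon<\mu$ and $M<0$. This root is spurious.
\item For the $W_{-1}$ branch, I use that $t\mapsto te^t$ is decreasing on $(-\infty,-1)$ and that $(-\Upsilon\mu)e^{-\Upsilon\mu} < -e^{-\Upsilon\mu}$, which holds because $\Upsilon\mu>1$. Together these show $z=W_{-1}(-e^{-\Upsilon\mu}) < -\Upsilon\mu$, i.e.\ $M>0$.
\end{itemize}
So the $W_{-1}$ root is the unique positive zero found above, which gives the claimed $M_j^*$.
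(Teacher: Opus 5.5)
Your proof is correct and follows the same route as the paper: you drop the constant factor, reduce to maximizing $(1-e^{-\Upsilon M})/(\beta+M)$, and solve the first-order condition $\Upsilon(M+\mu)e^{-\Upsilon M}=1$ with a substitution that turns it into $ze^{z}=-e^{-\Upsilon\mu}$. You also include a sign analysis of $N$, showing the critical point is the unique global maximizer on $[0,\infty)$, and an argument that discards the $W_0$ root in favor of $W_{-1}$; the paper skips the first and merely asserts the branch choice, so your version is the more complete one.
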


 However, this expression is not directly usable in practice, since it depends on the unknown coupling minorization constant $\xi_{j+1}$ which is generally unknown and difficult to approximate in MCMC settings. The optimizer $M^*_j$ is a real-valued quantity, whereas in practice the number of inner steps must be an integer. Instead, we have found empirical values for $M$ that offers the best trade-off between computation time and sampling efficiency in the experiment section. Nevertheless, this lemma provides a theoretical benchmark for the optimal trade-off between computational cost of likelihoods and effective mixing across layers.

\newcommand\figwidth{0.6\textwidth}

\begin{wrapfigure}{r}{\figwidth}
	\resizebox{\linewidth}{!}{\input{new-plots/posterior-with-grid.pgf}}
\caption{Posterior of different fidelities; coarse model is the small angle
	approximation.  See Section~\ref{ss:pendulum}.}
\label{fig:post}
\end{wrapfigure}

\subsection{Layer Tuning}

The algorithm above uses the coarser fidelities to guide the finer ones.  
Early in the chain, this is useful for quickly driving the samples toward
high-probability regions.
However, this mismatch between the fidelities can cause problems later
because it can steer the chain away from high-probability regions in the fine
fidelity model that do not overlap with high-probability regions of the
coarse fidelity model.  Figure~\ref{fig:post} demonstrates an example of
such partial, but not complete, overlap in one of our examples. To combat this, we present a simple modification that does not require
estimation of any internal variables of the probability models, nor 
estimation of means or variances from multiple chains. 

Recall $\fullpi{j}{\theta_j}$ is known up to a normalizing constant:
$    \fullpi{j}{\theta_j} = \sfrac{\fullpimod{j}{\theta_j}}{Z_j}$
where $\fullpimod{j}{\theta_j}$ is the unnormalized distribution and $Z_j$ is the normalzing constant. We modify the target distributions for coarser chains (and thus the proposal distributions for all $j>0$) as
\begin{align}
\psip{j}{\theta_j} &=  \sfrac{\big(\fullpimod{j}{\theta_j} + \smooth{j}\big)}{\zetap{j}{\smooth{j}}}
		& \forall\ \ 0\!<\!j\!\le\! J
	\label{eqn:adapt}
\end{align}
where $\zetap{j}{\smooth{j}}$ is the normalizing constant of this new distribution which depends on $\smooth{j}$.\footnote{We assume the domain of $\theta$, $\Theta$, is of finite volume.}

We now use $\psip{j}{\theta_j}$ in
place of $\fullpi{j}{\theta_j}$ in Equation~\ref{eqn:newaccept}, therefore modifying the acceptance ratio for all $\ 0\!<\!j\!\le\! J\,\, $ as
\begin{align}
    \mathcal{A}_j(\trans{\theta^i_j}) &= \min\left(1,
	\frac{\psip{j}{\tilde\theta^i_j}}{\psip{j}{\theta^i_j}} \cdot
	\frac{\psip{j+1}{\theta^i_{j+1}}}{\psip{j+1}{\tilde\theta^i_{j+1}}}
	\right) \,\,.
 \label{eqn:adaptaccept}
\end{align}
For the finest layer, things remain the same (or alternatively, $\smooth{0} = 0$), because we do not want to change the distribution of the overall sampler.

This effectively mixes the stationary distribution of the $j$th layer
with a uniform distribution (we have added a constant to the posterior and then renormalized), encouraging the proposal to explore more widely
than the coarser layer would normally.  While unsophisticated, we found it
simpler to implement and compute than other options and just as effective.

Instead of leaving $(\smooth{1}, \smooth{2}, \dots, \smooth{J})$ as hyper-parameters, we use gradient descent to adapt them over the course of the sampling.  We adjust $\smooth{j+1}$ to minimize the Kullback-Leibler divergence between layers $\psi_j$ and $\psi_{j+1}$: 
\begin{equation}
    \text{KL}(\psi_{j} \Vert \psi_{j+1}) = \E_{\theta \sim \psi_j}[\ln(\psi_{j})] - \E_{\theta \sim \psi_j}[\ln(\psi_{j+1})]\,\,.
\end{equation}

This tries to make the coarser (approximating) distribution $\psi_{j+1}$ more similar to the distribution $\psi_j$.
Because the first term does not depend on $\smooth{j+1}$, the objective function is to maximize 
\begin{equation}
    \obj{j+1} = \E_{\theta \sim \psi_j}\Bigl[\ln(\psi_{j+1})\Bigr] \,\,.
\end{equation}
Using Equation~\ref{eqn:adapt}, 
\begin{align}
    \smoothder{j+1} \obj{j+1}
    &= 
    \smoothder{j+1}\left(\E_{\theta \sim \psi_j}\left[\ln\left(\fullpimod{j+1}{\theta} + \smooth{j+1}\right)\right] - \ln \zetap{j+1}{\smooth{j+1}}\right) \nonumber
    \\
    &= \E_{\theta \sim \psi_j}\left[\smoothder{j+1}\ln\left(\fullpimod{j+1}{\theta} + \smooth{j+1}\right)\right] 
    - \E_{\theta \sim \psi_{j+1}}\left[\smoothder{j+1}\ln\left(\fullpimod{j+1}{\theta} + \smooth{j+1}\right)\right]
    \label{eqn:derivJ}
\end{align}
where the second step replaces the derivative of the log-partition function with the expected derivative of the log-probability. 

The first term is an expectation with respect to the distribution at the lower layer $j$.  We assume that the lower layer
has mixed and therefore, the starting state for the chain at layer $j+1$ is a sample drawn from $\psi_j$.  The second term is an expectation with respect to the distribution at this layer, $j+1$.  We let the sample at the \emph{end} of this chain after $M$ steps approximate a sample from
this distribution.  This is similar to the approximation employed by $M$-step contrastive divergence \cite{hinton2002training}.  Although this is not guaranteed to converge \cite{pmlr-v9-sutskever10a}, in practice we have found it to work well.  Thus, the total derivative for the gradient ascent update is
\begin{equation}
\label{eqn:grad_descent_omega}
    \smoothder{j+1}\obj{j+1} \approx \frac{1}{\fullpimod{j+1}{\theta_{j+1}^0} + \smooth{j+1}} - \frac{1}{\fullpimod{j+1}{\theta_{j+1}^M} + \smooth{j+1}}\,\,.
\end{equation}
Note that these denominators are calculated during the MCMC chain and therefore the derivative requires very little extra computation.

A single $\smooth{j}$ is kept for each layer and is maintained across 
subchains at that layer. We use a learning rate of $10^{-3}$ to adjust $\smooth{j}$ for all
experiments.  An update is made on layer $j$ once after each $M$-step subchain.

\newcommand\tuningname{FLARE-with-layer-tuning}

\begin{algorithm}
\caption{\tuningname ($\theta^0_j$,n,$j$)}
\label{alg:shrekMCMC}
\begin{algorithmic}
\For{$i= 0, \dots, n-1$}
    \If{$j=J$} \hfill\Comment{coarsest layer}
        \State{Sample $\tilde\theta^i_j$ from $\fullprop{j}{\cdot}{\theta^i_j}$}
        \State{Accept $\theta^{i+1}_j = \tilde{\theta}^i_j $ with probability $\mathcal{A}$ from Equation \ref{eqn:accept}}
        \State{Otherwise, reject and $\theta^{i+1}_j = \theta^i_j$}
    \Else  \State{${
    \theta^1_{j+1},\dots,}\theta^{M}_{j+1} = $ \tuningname($\theta^{i}_j$, M, $j\!+\!1$)}
        \State{$\tilde\theta^i_j = \theta^M_{j+1}$}
       \State{Update gradient of $\omega_{j+1}$ using $ \smoothder{j+1}\obj{j+1}$ from Equation \ref{eqn:grad_descent_omega}}
       \State{Accept $\theta^{i+1}_j = \tilde{\theta}^i_j $ with probability $\mathcal{A}_j$ from Equation \ref{eqn:adaptaccept}}
        \State{Otherwise, reject and $\theta^{i+1}_j = \theta^i_j$}
    \EndIf
\EndFor
\State{\Return $\theta^1_{j}, \dots, \theta^n_{j}$}
\end{algorithmic}
\end{algorithm}

 To make the innermost Gaussian proposal more robust, we adaptively update the covariance of the proposal distribution as initially proposed in the AM algorithm \cite{bj/1080222083}.
 We use the history of chains from the coarsest layer $\theta^0_J, \theta^1_J, \dots ,\theta^t_J$ to update the covariance for the inner most proposal distribution. By using all previous states of the coarsest layer, the proposal distribution quickly adapts using the accepted samples. This rapid start of adaptation ensures good mixing in the inner most layer which gives higher quality candidate samples for the finer chains. We show the recursive algorithm with layer tuning adaptation added in Algorithm \ref{alg:shrekMCMC}. Here we update the gradient after $M$ steps of each layer and use it in the acceptance probability with $\omega$ mixed in as a uniform distribution to the target distribution.

\subsection{Ergodicity of Layer Tuning}

We show \flarename{} with adaptive tuning of the proposals at each layer is ergodic.  This can be shown with diminishing adaptation and simultaneous uniform ergodicity. 

\begin{lemma}
\label{lem:layertuninglem}
    For layers $0 \leq j < J$, let $\gamma_j \in \Gamma_j$ be the adaptations for the proposal at layer $j$ or the chain at level $j+1$, i.e, $\gamma_j = \smooth{j+1} \leftrightarrow \psip{j+1}{\theta} \leftrightarrow \transp{j+1}{}{\theta}{\cdot}$ where $\Gamma_j \in \mathbb{R} $ and $\psi_{j+1}$ is the target at layer $j+1$ with the layer tuning adaptation added. Let $\transp{j,\gamma_j}{}{\theta}{\cdot}$ denote the transition distribution of chain at level $j$ using adaptation $\gamma_j$, starting in state $\theta$. Assume $\forall j, \omega_j \in \bigl[ \underline{\omega}, \overline{\omega} \bigr]$ for some $0<\underline{\omega}<\overline{\omega}$, and, at the inner most layer, there exists a minorization constant $\xi_\capJ > 0$ such that
    $\left\|\transp{\capJ, \gamma_{\capJ}}{M}{\theta}{\cdot} - \psi_{\capJ}(\cdot) \right\| \leq (1 - \xi_{\capJ})^M$. 
    Then, 
     \begin{enumerate}[label=(\alph*)]
     \item Simultaneous uniform ergodicity: For all $\tau > 0$. there  exists $n = n(\tau) \in \mathbb{N}$ such that 
    \begin{equation}
        \left\|\transp{\capJ, \gamma_{\capJ}}{n}{\theta}{\cdot} - \psi_j(\cdot) \right\| \leq \tau
    \end{equation}
    for all $\theta \in \Theta$ and $\gamma_j \in \Gamma_j$.
        \item Diminishing adaptation: 
        The amount of adaptation diminishes in probability with the number of steps $t$ in the adaptation as 
    \begin{equation}
        \lim_{t \rightarrow \infty} \sup_{\theta} \left\| \transp{j, \gamma_j^t}{}{\theta}{\cdot} - \transp{j, \gamma_j^{t+1}}{}{\theta}{\cdot} \right\| = 0 .
    \end{equation}
\end{enumerate}
\end{lemma}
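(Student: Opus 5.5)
The plan is to prove (a) and (b) separately, taking (a) from the minorization bound of Lemma~\ref{lem:minlowerbound}, made uniform over the adaptation parameters, and (b) from the vanishing step size of the update in Equation~\ref{eqn:grad_descent_omega}.

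For (a), induct downward from the innermost layer. The assumption $\bigl\|\transp{\capJ,\gamma_\capJ}{M}{\theta}{\cdot} - \psi_\capJ(\cdot)\bigr\| \leq (1-\xi_\capJ)^M$ is taken with $\xi_\capJ$ independent of $\gamma_\capJ$. At each finer layer $j$, rerun the argument of Lemma~\ref{lem:minlowerbound} with the $\psi$'s in place of the $\pi$'s. This gives
\[
\transp{j,\gamma_j}{1}{\theta}{\theta'} \geq \xi_j(\gamma)\,\psi_j(\theta'),
\qquad
\xi_j(\gamma)=\bigl(1-(1-\xi_{j+1})^M\bigr)\min_\theta \frac{\psi_{j+1}(\theta)}{\psi_j(\theta)} .
\]
The key step is to bound the ratio $\psi_{j+1}/\psi_j$ below uniformly over $\omega_{j+1},\omega_j\in[\underline\omega,\overline\omega]$. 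The numerator satisfies $\tilde\pi_{j+1}+\omega_{j+1}\ge \underline\omega>0$. The denominator $\tilde\pi_j+\omega_j$ is bounded above provided $\tilde\pi_j$ is bounded, which I will assume. Since $\Theta$ has finite volume, the normalizers $\zeta_j(\omega)$ lie in a compact interval bounded away from $0$. Together these give $\min_\theta \psi_{j+1}/\psi_j \ge c_j>0$ with $c_j$ independent of the adaptation. (At $j=0$, $\psi_0=\pi_0$ with $\omega_0=0$, and the same bound holds.) Hence $\xi_j^{\min}:=\inf_\gamma \xi_j(\gamma)>0$. Theorem~8 of Roberts and Rosenthal, as used in Theorem~\ref{thm: conv-thm}, then gives $\|\transp{j,\gamma_j}{n}{\theta}{\cdot}-\psi_j\|\le (1-\xi_j^{\min})^n$. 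Choosing $n(\tau)=\lceil \log\tau/\log(1-\xi_j^{\min})\rceil$ proves (a) uniformly in $\theta$ and $\gamma_j$.

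For (b), write the transition kernel at level $j$ as a Metropolis–Hastings kernel whose proposal is the $M$-step inner kernel and whose acceptance is Equation~\ref{eqn:adaptaccept}. Both depend on $\omega_{j+1}$, and recursively on the coarser $\omega$'s, only through the smooth maps $\omega\mapsto \psi$. On the compact box $[\underline\omega,\overline\omega]$ these maps are Lipschitz in sup-norm: the derivatives of $(\tilde\pi+\omega)/\zeta(\omega)$ are bounded because $\tilde\pi+\omega\ge\underline\omega$ and $\zeta$ is bounded below. Since $\min(1,\cdot)$ is $1$-Lipschitz, and a composition of $M$ kernels changes in total variation by at most $M$ times the one-step change, induction on layers gives
\[
\sup_\theta \bigl\|\transp{j,\gamma}{}{\theta}{\cdot}-\transp{j,\gamma'}{}{\theta}{\cdot}\bigr\| \le L_j\,|\gamma-\gamma'| .
\]
It then suffices that $|\omega^{t+1}-\omega^t|\to0$. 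The update of Equation~\ref{eqn:grad_descent_omega} has magnitude at most $2\eta/\underline\omega$ for step size $\eta$, with the iterates projected onto $[\underline\omega,\overline\omega]$. So adaptation diminishes if $\eta_t\to0$, for instance $\eta_t=10^{-3}/t$.

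The main obstacle is exactly this point. With the constant learning rate stated in the text, the increments only stay bounded by $2\cdot 10^{-3}/\underline\omega$ and do not vanish. I would therefore state (b) under a decaying step size with projection onto the box, or argue that the stochastic gradient's increments vanish in probability. A secondary point is making the Lipschitz constant $L_j$ explicit through the recursion, which requires the bound on $\tilde\pi_j$ assumed in (a).
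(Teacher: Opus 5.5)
Your proposal is correct and follows essentially the same route as the paper. For (a), the paper also bounds $\min_\theta \psi_{j+1}(\theta)/\psi_j(\theta)$ below uniformly over $\omega\in[\underline{\omega},\overline{\omega}]$, using the finite volume of $\Theta$ to control the normalizers, propagates this down the layers via Lemma~\ref{lem:minlowerbound}, and obtains $n(\tau)$ from the geometric bound. For (b), it uses the $1/\underline{\omega}$ bound on the gradient together with a step size $\eta_t\to 0$, which the paper's proof also requires despite the constant learning rate used in the experiments.

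Your additions are improvements: you make explicit that $\tilde\pi_j$ must be bounded (the paper uses this implicitly), and you justify the passage from $|\omega^{t+1}-\omega^t|\to 0$ to vanishing kernel differences (the paper only asserts it). For that Lipschitz step, work on the log scale rather than citing only that $\min(1,\cdot)$ is $1$-Lipschitz. The normalizers cancel in the acceptance ratio, its logarithm moves by at most a multiple of $|\Delta\omega|/\underline{\omega}$, and $x\mapsto\min(1,e^x)$ is $1$-Lipschitz. This avoids needing the ratio itself to be bounded at $j=0$, where $\pi_0(\theta')/\pi_0(\theta)$ can be unbounded.
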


Proof for Lemma \ref{lem:layertuninglem} can be found in Appendix \ref{proof:app-layer-tuning}.

\begin{theorem}
\label{thm:layertuning}
    \flarename\ with an adaptive layer tuning parameter is ergodic. 
\end{theorem}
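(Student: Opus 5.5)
The plan is to invoke the standard ergodicity theorem for adaptive MCMC of Roberts and Rosenthal (Theorem 5 of their 2007 paper on coupling and ergodicity of adaptive MCMC). That theorem states that an adaptive chain whose kernels $\{\transp{j,\gamma}{}{\theta}{\cdot}\}_{\gamma\in\Gamma_j}$ each leave the same target invariant is ergodic if two conditions hold: simultaneous uniform ergodicity and diminishing adaptation. Lemma~\ref{lem:layertuninglem} supplies exactly these two conditions, parts (a) and (b). The proof therefore reduces to verifying the structural hypotheses of that theorem for \flarename{} and then applying it at level $j=0$.

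The first step is to check that, for every fixed value of the adaptation parameter $\gamma_0=\smooth{1}$, the outer transition kernel $\transp{0,\gamma_0}{}{\theta}{\cdot}$ has $\pi_0$ as its stationary distribution. The argument follows the derivation of Equation~\ref{eqn:newaccept}. With $\smooth{1}$ held fixed, the inner chain at level $1$ satisfies detailed balance with respect to $\psi_1$. Consequently the $M$-step proposal obeys
\[
\frac{\proplvlback{0}{\theta}}{\proplvl{0}{\theta}}=\frac{\psip{1}{\theta}}{\psip{1}{\tilde\theta}}.
\]
Since $\smooth{0}=0$, the acceptance rule in Equation~\ref{eqn:adaptaccept} is then the exact Metropolis--Hastings ratio for the target $\pi_0$. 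The same reasoning applied recursively shows that each level $j$ is $\psi_j$-reversible for fixed parameters. The normalizing constants $\zeta_j$ cancel in all ratios, and the finite-volume assumption on $\Theta$ keeps $\psi_j$ a proper density.

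The second step is to apply the Roberts--Rosenthal theorem to the collection of parameters $(\smooth{1},\dots,\smooth{J})$ restricted to the compact box $[\underline\omega,\overline\omega]^J$. Lemma~\ref{lem:layertuninglem}(a) gives simultaneous uniform ergodicity, and Lemma~\ref{lem:layertuninglem}(b) gives diminishing adaptation. The theorem then yields
\[
\lim_{n\to\infty}\|\mathcal{L}(\theta_0^n)-\pi_0\|=0,
\]
which is the claimed ergodicity of the layer-$0$ sampler. A weak law of large numbers for bounded functionals follows from the same reference if needed.

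I expect the main obstacle to be a subtlety in the first step. The adaptation of $\smooth{j+1}$ happens after the $M$-step subchain and uses that subchain's own endpoints, through Equation~\ref{eqn:grad_descent_omega}. One must check that, conditional on the parameter in force during a given step, the transition is the fixed-$\gamma$ kernel. That is, the adaptive parameter used within a step must be measurable with respect to the past, as the Roberts--Rosenthal framework requires. I would handle this by indexing $\gamma_j^t$ so that the update computed after subchain $t$ is used only from step $t+1$ onward.

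A related point is that the parameters must stay in $[\underline\omega,\overline\omega]$. I would enforce this by projecting (clipping) the gradient step onto that interval, which is consistent with the assumption of Lemma~\ref{lem:layertuninglem}. A bounded learning rate together with bounded gradient terms (each at most $1/\underline\omega$) makes the per-step parameter changes small enough for part (b) to apply.
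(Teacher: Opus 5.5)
Your route is the paper's route. Its proof of Theorem~\ref{thm:layertuning} is only the remark that Lemma~\ref{lem:layertuninglem}(a) and (b) are the simultaneous-uniform-ergodicity and diminishing-adaptation hypotheses of the Roberts--Rosenthal (2007) theorem, which the paper cites as Theorem~1. Your first step shows that each frozen-parameter level-$0$ kernel $P_\gamma$ leaves $\pi_0$ invariant, because $K^M$ is $\psi_1$-reversible whenever $K$ is. That step correctly supplies a hypothesis the paper leaves implicit.

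The gap is in how you make the actual algorithm fit that framework. In Algorithm~\ref{alg:shrekMCMC}, $\omega_{j+1}$ is updated after every level-$(j+1)$ subchain, i.e.\ after every level-$j$ step. So for $j\ge1$ the parameter $\omega_{j+1}$ changes $M^{j}$ times inside a single level-$0$ step. The adaptive covariance of the innermost Gaussian is likewise updated along the innermost chain. Your re-indexing (use the update from subchain $t$ only from step $t+1$ of the same level) repairs the ordering problem for $\omega_1$, which Algorithm~\ref{alg:shrekMCMC} updates before the level-$0$ accept/reject. For $j\ge1$, however, it still lets $\omega_{j+1}$ change in the middle of an outer step. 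The outer proposal is then produced by a product $K_{\gamma^{(1)}}\cdots K_{\gamma^{(M)}}$ of $\psi_1$-reversible kernels whose parameters depend on the inner path. Such a product is in general not $\psi_1$-reversible, so $\psi_1(\theta)/\psi_1(\tilde\theta)$ is no longer the correct Hastings correction. The outer transition is then not one of the frozen kernels $P_\gamma$ and need not leave $\pi_0$ invariant. Hence for $J\ge2$, or whenever the covariance adapts, the algorithm as written does not meet the stationarity hypothesis of the Roberts--Rosenthal theorem. The paper's one-line proof passes over the same point.

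To close the gap, freeze the whole vector $(\omega_1,\dots,\omega_J)$ for the duration of each level-$0$ step. Freeze the innermost covariance as well; Lemma~\ref{lem:layertuninglem} does not cover it at all. Accumulate the inner gradient steps and apply them only after the level-$0$ accept/reject. Then outer step $n$ is exactly $P_{\gamma_n}$ with $\gamma_n$ determined by the history, and your first step applies verbatim. The change in $\gamma$ per outer step is then at most a constant depending on $M$ and $J$ times $\eta_t/\underline{\omega}$.

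Separately, a bounded learning rate does not give diminishing adaptation. You need $\eta_t\to0$, which is what the paper's proof of part (b) actually uses; the constant rate $10^{-3}$ used in the experiments violates it. Given $\eta_t\to0$, you still need to pass from vanishing $\omega$-increments to vanishing $\sup_\theta$ total-variation increments of the kernels. Lemma~\ref{lem:layertuninglem}(b) asserts this without argument, but it follows from three facts:
\begin{enumerate}
\item every log-acceptance ratio has derivative in each $\omega_j$ bounded by $1/\underline{\omega}$ in absolute value, uniformly in $\theta,\tilde\theta$;
\item $x\mapsto\min(1,e^x)$ is $1$-Lipschitz;
\item the sup-norm total-variation distance between composed kernels is bounded by the sum of those between the factors.
\end{enumerate}
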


\begin{proof}
    We use Lemma \ref{lem:layertuninglem} to show the conditions necessary in Theorem 1 of Roberts and Rosenthal (2007) \cite{diminishing-adap-ergodicity}. This shows that the adaptive algorithm is ergodic. 
\end{proof}

\section{Experiments}

We measure the efficiency of the MCMC methods tested using the effective
sample size (ESS) \cite{Ripley87} estimated across multiple chains as 
\begin{equation}
   N_{ESS} = \sfrac{\left(N \cdot K\right)}{\left(1 + 2\sum_{k=1}^{2m+1}\rho(k)\right)} 
\end{equation}
where $N$ is the number of samples, $K$ is the number of chains,
$\rho(k)$ is the lag-$k$ correlation, 
and $m$ is the largest value such that $\rho(2m) + \rho(2m+1) > 0$. We compute ESS for each parameter 
for the ``bulk'' (entire distribution) and ``tail'' (largest and smallest $5\%$ of the samples) of the distributions.

We compare \flarename\ with standard Metropolis-Hastings (with proposal adaptation introduced by Haario et al. \cite{bj/1080222083}) and other multi-fidelity MCMC methods: Multilevel
Delayed Acceptance MCMC (MLDA) \cite{doi:10.1137/22M1476770}, MLDA with Adaptive Error Model
(AEM) \cite{AEM, doi:10.1137/22M1476770}, Multi Level MCMC (MLMCMC) \cite{MLMCMC,
MLDA}, and Multi-fidelity Monte Carlo \cite{NEURIPS2022_8803b9ae}. We give more detail about the different methods used for comparison: 

\begin{enumerate}[itemsep=0pt]
    \item MLDA without any adaptation: Introduced by Lykkegaard et al. \cite{doi:10.1137/22M1476770}, this method uses recursive chains of approximations as proposals. However, there is no adaptation being done to ``correct" the approximations. Even the authors note that without adaptation, the chains do not mix well, and have poor effective sample sizes. 
    \item MLDA with AEM: Extended by Lykkegaard et al. \cite{doi:10.1137/22M1476770} in the same paper, this method uses a similar structure to the above. They also use Adaptive Error Model (AEM) as a way to deal with the discrepancies between the different layers which uses a telescoping sum of differences in the mean of the approximations. They demonstrate with the subsurface flow model that MLDA with AEM leads to good mixing and high ESS. We demonstrate similar results for two of our experiments. Our subsurface flow model experiment uses the same fidelities as set up by the original authors; however, we run it to collect more samples using a higher number of chains. 
    \item Multi Level MCMC (MLMCMC): This method was proposed by Dodwell et al. \cite{MLMCMC} and was then applied to MLDA. A quantity of interest, $Q$, is proposed that is
    related to the parameters of the model. The samples drawn from the posterior are used to reduce the variance of $Q$. Since in MLDA, samples are not only drawn
    from a ``true'' posterior, but also approximations, the samples from the approximate levels are used to reduce the variance of $Q$.  They state that it thus
    requires fewer samples to achieve the same variance. Using a telescopic sum, the difference of $Q$ estimates between levels are used to correct $Q$ with respect
    to the next coarser level. For the pendulum model, $Q$ is the mean of the outputs. For the subsurface flow experiment used by the original authors of MLDA, $Q$ is the hydraulic head at some fixed point $(x, y)=(0.5, 0.45)$; that is, the model PDE is solved at these points at each level using samples from the coarser approximate level. 
    \item Multi-Fidelity Monte Carlo (MFMC): This method was proposed by Cai and Adams \cite{NEURIPS2022_8803b9ae}. It uses a continuum of models with increasing fidelity and has a
    single Markov chain with a random choice of the fidelity, $K$, at each step. The fidelity $K$ is part of the sampled state-space and therefore also part of the proposal distribution and acceptance probability. 
    We map $K$ to a reasonable range of fidelities for each experiment. For the pendulum model, we map $K$ to the error tolerance of the integrator, $\epsilon$,
    as $\epsilon = e^{\sfrac{K}{10}} + 10^{-6}$. 
    For the subsurface flow experiment, we let the grid resolution be equal to $10K$ ($K$ is the sampled fidelity of this method) in order to map the resolution to the fidelity range expected by the algorithm's implementation. The samples from this method are not from the true posterior, but rather can be corrected to estimate an expectation (like the mean).  Therefore, we do not plot the evolution of effective samples with respect to time in the results, as they are not samples from the true posterior.
\end{enumerate}

\newrobustcmd\BB{\DeclareFontSeriesDefault[rm]{bf}{b}\bfseries}

\begin{table}[t]
 \caption{Mean ESS for bulk and tail distributions across 50 runs for 10 chains each, and mean value of each parameter with std error 
    across all 500 total runs are listed. Average acceptance rates are listed per layer and the total samples are based on the average cost of likelihood evaluations per sample per method. Each chain is run for a total of 1000 seconds.}
    \subfloat[Simple Pendulum]{
    \label{tab:pend-append}
    \resizebox{0.96\textwidth}{!}{%
    \begin{tabular}{ccccccccccllc}
    \hline
    \multirow{2}{*}{} & \multicolumn{4}{c}{$\alpha$} & \multicolumn{4}{c}{$L$} & \multicolumn{3}{c}{\begin{tabular}[c]{@{}c@{}}acceptance\\  rate\end{tabular}} & \multirow{2}{*}{\begin{tabular}[c]{@{}c@{}}total \\ samples\end{tabular}} \\ \cmidrule(lr){2-5}\cmidrule(lr){6-9}\cmidrule(lr){10-12}
     & \begin{tabular}[c]{@{}c@{}}bulk \\ ESS/s\end{tabular} & \begin{tabular}[c]{@{}c@{}}tail \\ ESS/s\end{tabular} & mean & sd & \begin{tabular}[c]{@{}c@{}}bulk \\ ESS/s\end{tabular} & \begin{tabular}[c]{@{}c@{}}tail \\ ESS/s\end{tabular} & mean & sd & $j = 0$ & $j = 1$ & $j = 2$ &  \\ \midrule
    MCMC & 21.43 & 29.94 & 0.953 & 0.492 & 17.94 & 29.94 &  1.265 & 0.355 & 0.29 &  &  & 100000 \\ 
    [0.5em]
    \flarename (single) & \BB 52.47 & \BB 58.42 &  1.081 & 0.019 &  \BB 45.85 &  \BB 54.95 & 1.372 & 0.016 & 0.98 & 0.24 &  & 60000 \\ 
    AEM MLDA (single) & 39.41 & 53.91 & 1.064 & 0.062 & 42.20 & 52.24 & 1.371 & 0.051 & 0.98 & 0.27 &  & 50000 \\
    MLMCMC (single) & 25.03 & 34.43 & 1.044 & 0.291 & 21.54 & 32.99 & 1.351 & 0.167 & 0.92 & 0.32 &  & 44500 \\
    MLDA (single) & 11.64 & 1.49 & 1.059 & 0.041 & 15.36 & 7.73 & 1.361 & 0.022 & 0.91 & 0.31 &  & 45000 \\
    [0.5em]
    \flarename (double) & \BB 64.26 & \BB 72.10 & 1.086 & 0.001 & \BB 56.68 & \BB 68.20 &  1.374 & 0.009 & 0.99 & 0.86 & 0.29 & 35000 \\
    AEM MLDA (double) & 56.92 & 65.53 & 1.085 & 0.006 & 50.95 & 58.51 & 1.375 & 0.003 & 0.98 & 0.89 & 0.3 & 25000 \\ 
    MLMCMC (double) & 33.82 & 37.05 & 1.049 & 0.015 & 33.09 & 39.94 & 1.361 & 0.015 & 0.90 & 0.81 & 0.28 & 25500 \\
    MLDA (double) & 10.01 & 7.54 & 1.053 & 0.035 & 4.96 & 12.99 & 1.362 & 0.024 & 0.86 & 0.72 & 0.24 & 30000 \\
    [0.5em]
    MFMC & & & 1.075 & 0.057 & & & 1.348 & 0.132 & 0.46 & & & 370000 \\
    \midrule
    \end{tabular}}}
    
    \subfloat[Subsurface Flow model]{
    \label{tab:hydro-append}
    \huge
    \resizebox{0.96\textwidth}{!}{%
    \begin{tabular}{ccccccccccccccllc}
    \hline
    \multirow{2}{*}{} & \multicolumn{4}{c}{$\theta_1$} & \multicolumn{4}{c}{$\theta_2$} & \multicolumn{4}{c}{$\theta_3$} &  \multicolumn{3}{c}{\begin{tabular}[c]{@{}c@{}}acceptance\\  rates\end{tabular}} & \multirow{2}{*}{\begin{tabular}[c]{@{}c@{}}total \\ samples\end{tabular}} \\ \cmidrule(lr){2-5}\cmidrule(lr){6-9}\cmidrule(lr){10-13}\cmidrule(lr){14-16}
      & \begin{tabular}[c]{@{}c@{}}bulk \\ ESS/s\end{tabular} & \begin{tabular}[c]{@{}c@{}}tail \\ ESS/s\end{tabular} & mean & sd & \begin{tabular}[c]{@{}c@{}}bulk \\ ESS/s\end{tabular} & \begin{tabular}[c]{@{}c@{}}tail \\ ESS/s\end{tabular} & mean & sd & \begin{tabular}[c]{@{}c@{}}bulk \\ ESS/s\end{tabular} & \begin{tabular}[c]{@{}c@{}}tail \\ ESS/s\end{tabular} & mean & sd & $j = 0$ & $j = 1$ & $j = 2$ &  \\ \midrule
    
    MCMC & 5.35 & 7.81 & -0.457 & 0.0037 & 5.49 & 8.04 & 0.466 & 0.0036 & 5.63 & 8.19 & 0.076 & 0.0034 & 0.27 &  &  & 10000 \\ [0.5em]
    \flarename\ (single) &  \BB 8.74 &  \BB 11.99 &  -0.460 & 0.0030 &  \BB 8.58 &  \BB 11.81 & 0.467 & 0.0036 & \BB 8.63 & \BB 11.58 & 0.076 & 0.0028 &  0.98 & 0.26 &  & 8365 \\
    AEM MLDA (single) & 7.77 & 4.67 & -0.459 & 0.0032 & 6.96 & 4.29 & 0.466 & 0.0033 & 8.16 & 6.24 &  0.076 & 0.0030 & 0.99 & 0.29 &  & 4100 \\ 
    MLMCMC (single) & 4.35 & 5.86 & -0.460 & 0.0028 & 4.37 & 3.27 & 0.465 & 0.0031 & 5.29 & 6.49 & 0.077 & 0.0021 & 0.93 & 0.32 &  & 8000  \\ 
    MLDA (single)  & 3.65 & 1.52 & -0.463 & 0.0035 & 4.78 & 1.56 & 0.490 & 0.0036 & 3.33 & 3.02 & 0.075 & 0.0031 & 0.87 & 0.34 &  & 8250 \\  [0.5em]
    \flarename\ (double) &  \BB 15.49 &  \BB 21.35 & -0.460 & 0.0026 & \BB 15.141 &  \BB 19.69 &  0.469 & 0.0026 & \BB 15.07 & \BB 19.76 & 0.077 & 0.0023 & 0.99 & 0.95 & 0.3 & 6500 \\
    AEM MLDA (double) & 13.29 & 14.70 & -0.460 & 0.0035 & 12.32 & 13.22 & 0.468 & 0.0035 & 12.70 & 14.45 &  0.077 & 0.0033 & 0.99 & 0.93 & 0.24 & 2285 \\ 
    MLMCMC (double) & 8.46 & 10.49 & -0.459 & 0.0027 & 8.04 & 9.13 & 0.468 & 0.0030 & 7.47 & 5.81 &  0.077 & 0.0024 & 0.92 &0.89 & 0.31 & 6000 \\
    MLDA (double) & 4.48 & 5.05 & -0.461 & 0.0036 & 5.33 & 6.60 & 0.469 & 0.0033 & 5.28 & 7.35 & 0.076  & 0.0030 & 0.93 & 0.91& 0.27 & 6350 \\
    [0.5em]
    MFMC & & & -0.475 & 0.026 & & & 0.418 & 0.041 & & & 0.102 & 0.003 & 0.39 & & & 5700 \\
    \midrule
    \end{tabular}}}
\end{table}

For the MLDA-based methods, we use the authors' implementations in the 
open-source probabilistic programming package PyMC3 \cite{Salvatier2016} by Lykkegaard et al.
\cite{doi:10.1137/22M1476770}
For MFMC, we use the author-provided implementation.

These implementations have significant computational overhead
compared with our implementation of \flarename.  Therefore, we only measure the
time taken in likelihood computation (which is the same code for all methods).

We present three different experimental posterior sampling problems across
different scientific domains: a simple pendulum, a hydrology simulation that
was used by prior methods as a benchmark, and a cosmology simulation that stresses computational limits.

Because of the computational expense of the cosmology simulation, we are
not able to collect a sufficient number of samples to get a reliable estimate
of the effective sample sizes. Instead, we compare our estimates to
those in the cosmology literature.

For each experimental domain, we construct three fidelities by adjusting 
the relevant simulation parameter.  In all cases, we measure our abilities
to sample from the highest fidelity ($j\!\!=\!\!0$).  For methods labeled
``(single),'' there is a single higher fidelity layer ($J=1$).  For methods
labeled ``(double),'' there are two higher fidelity layers ($J=2$): the
one from the ``(single)'' experiments, plus one more that is even more coarse.
For the coarsest fidelity, a Gaussian proposal distribution is used with an adaptive covariance matrix.

In the pendulum and cosmology experiments, this normal distribution is reflected
to keep parameters within their respective ranges. 
\flarename\ can be extended beyond $J\!=\!2$ layers. However, just two layers improves over the standard MCMC and other multi-level methods significantly.
Layers' costs should be roughly orders-of-magnitude different in computational costs.  For these examples, $J\!=\!2$ is the limit of how many layers can
practically be constructed with orders-of-magnitude different computational costs.

\subsection{Simple Pendulum}
\label{ss:pendulum}

The equation of motion for a pendulum of length $L$, mass $M$, and initial angle $\alpha_0$ is
    $\ddot{\alpha}= -\left(\sfrac{g}{L}\right) \sin \alpha$. 
Our goal is to sample from the posterior of the distribution the two
parameters $\theta=(L, \alpha)$ conditioned on the observations of
$\alpha$ at three irregularly spaced times during the motion:
$\alpha(1)=-0.85$, $\alpha(2.3) = 0.9$, $\alpha(5.0)=0.95$.  Observations of these angles
are assumed to be corrupted by Gaussian noise with known standard
deviation: $\sigma=0.1$.
Different
fidelities correspond to adjusting the error tolerance of an adaptive Runge-Kutta 4(5) ODE integrator \cite{DORMAND198019} 
($10^{-3}$ or $10^{-6}$ in our experiments) with stepsize control and dense output \cite{Hairer}.
As a separate coarsest layer of approximation, we use the
small angle approximation (which does not hold for the observations),
$\sin(\alpha) \approx \alpha$, reducing the equation of motion to a simple harmonic motion 
which can be solved analytically as $\alpha(t) =
\alpha_0\cos(t\sqrt{\sfrac{g}{L}})$.
The difference between the finest fidelity posterior 
and this small angle approximation
is shown in Figure~\ref{fig:post}.

\begin{wrapfigure}{r}{\figwidth}
	\vspace{-\baselineskip}
	\resizebox{\linewidth}{!}{\input{new-plots/ess-comparison-new.pgf}}
\caption{ESS as a function of M for two single layered ($J = 1$) methods with fixed computation time of 500 seconds each ran across 50 different chains.}
	\vspace{-\baselineskip}
\label{fig:ess-comp}
\end{wrapfigure}

\paragraph{Results}
To judge the importance of setting $M$, we evaluated our method across different values of $M$ with fixed total computation time. The effective sample size (ESS) is plotted as a function of $M$ for the two competing multilevel methods in Figure \ref{fig:ess-comp}. We can see that beyond $M=5$, the increased computation time from running longer inner subchains leads to a decrease in overall sampling efficiency, indicating that $M = 5$ offers the best trade-off between computation time and sampling efficiency. Therefore, the layered subchains were run for $M=5$ steps.

We ran 10 chains of the finest fidelity for all methods.We
replicated this experiment (of 10 chains) 50 times. 
Table~\ref{tab:pend-append} summarizes the mean effective sample size per second (ESS/s) and the average mean of the parameters across all 500 chains with the standard deviation along with the acceptance rates for every layer.

Figure~\ref{fig:pend-figs} shows ESS (across all 10 chains) as a function of
computation time for each method, with the total number of samples ($N$)
generated in 1000 seconds.  The standard deviations
are plotted as (barely visible) vertical bars. For the sake of readability, we have separated our plots to show how each method performs with one level of nesting (single) and two levels of nesting (double).

We note that the MFMC method obtains significantly more samples in the same time budget. This result arises from its randomized fidelity selection, which collects
samples at low-fidelity evaluations more frequently than other methods.  Since the samples from this method are not from the true posterior, we do not list the effective samples in the table.
However, for comparison, the average bulk and tail ESS/s for 50 runs of 10 chains each measured for parameters $[\alpha, l] $ are $[24.80, 20.46]$ and $[26.52, 30.61]$
for MFMC. 

All methods are able to improve by using more fidelities.
Our \flarename\ method is consistently and significantly better than the other
methods (including the best one, MLDA with AEM as shown in the table) in terms of ESS/s in both the bulk and tail of the distribution. The acceptance rates for the
multilevel methods indicate that while the coarsest level accepts about a third of the samples (consistent with Gelman et al. \cite{Gelman1996}), the proposed sample from that
level is accepted by the finer levels frequently since it was already accepted by an approximate coarse level. The mean of the parameter across different runs of
the standard MCMC has a higher standard deviation compared to the other multilevel methods suggesting that some runs of MCMC do a poor job at finding the modes in the posterior. \flarename\ produces more samples in the same amount of time compared to its competing method MLDA reflecting that our sampler requires fewer likelihood evaluations per step. 

\newcommand\figscale{0.65}

\begin{figure}
\centering

\subfloat[][Single]{\scalebox{\figscale}{\input{new-plots/pendulum-fixed-single-grid-new.pgf}}\label{fig:pend-single}}
\subfloat[][Double]{\scalebox{\figscale}{\input{new-plots/pendulum-fixed-double-grid-new.pgf}}\label{fig:pend-double}}

\caption{Pendulum Model: ESS for bulk and tail across 50 runs (mean and std.~dev.) for single and double layers of nesting.
}
	\vspace{-\baselineskip}

\label{fig:pend-figs}
\end{figure}

\begin{figure}
    {\centering \resizebox{\linewidth}{!}{\input{new-plots/wass-dist-grid-new.pgf}}}
\caption{Wasserstein distance to true distributions for the pendulum model.
}
	\vspace{-\baselineskip}
\label{fig:conv}
\end{figure}

Figure \ref{fig:conv} shows the distributional distance to the true distribution as a function of number of samples, averaged over 500 chains. The true mean and
standard deviation are unknown for all the experiments in the paper, and measuring the distance between a multi-dimensional distribution which can be evaluated
(only up to a normalizing constant) and a distribution represented by samples is non-trivial. However, we have analyzed the $1$-dimensional marginals of the
pendulum model in the following way. We evaluate the true unnormalized distribution on a grid, normalize it, project it to the marginal of interest and then treat
it as (weighted) samples for a sample-to-sample Wasserstein distance between it and the samples from the MCMC methods. As we refine the grid, the distances become
smaller (for almost all methods). We refine the grid until these distances stabilize, resulting in about a 1000-by-1000 grid (1 million llh evaluations). From the
figure, it is clear that for both the parameters, \flarename{} ends up with the smallest Wasserstein distance to the true distribution.

\subsection{Estimation of Soil Permeability in Subsurface Flow}

We consider a simple problem in subsurface flow modeling \cite{MLMCMC}. This model was also used to evaluate the MLDA methods by the original authors \cite{doi:10.1137/22M1476770}, and we did not modify the code used by MLDA (except to increase the number of chains and measure time). 

The classical equations governing (steady state) single–phase subsurface flow consist of Darcy’s law coupled
with an incompressibility condition:
\begin{equation}\label{eqn:darcy}
    w + k\nabla p = g \text{ and } \nabla\cdot w = 0
\end{equation}
subject to suitable boundary conditions. All quantities are fields over $\mathcal{D} = [0,1]^2 \subset \mathbb{R}^2$
for these experiments.
Here $p$ denotes the hydraulic head of the
fluid, $k$ is the permeability tensor, $w$ is filtration velocity (or Darcy
flux) and $g$ is the (known) source term.

\begin{figure}
\centering

\subfloat[][Single]{\scalebox{\figscale}{\input{new-plots/hydro-fixed-single-grid-new.pgf}}\label{fig:hydro-append-single}}
\subfloat[][Double]{\scalebox{\figscale}{\input{new-plots/hydro-fixed-double-grid-new.pgf}}\label{fig:hydro-append-double}}

\caption{Subsurface Flow Model: ESS for bulk and tail across 50 runs (mean and std.~dev.) for single and double layers of nesting.
}
	\vspace{-\baselineskip}

\label{fig:hydro-figs}
\end{figure}

We are interested in the permeability given observations 
(with known-variance Gaussian noise) of the
hydraulic head at 16 regularly spaced points in $\mathcal{D}$.
$k$ is simplified to be the gradient of a random scalar field.
The log-Gaussian scalar field is parameterized with a truncated Karhunen-Lo\'{e}ve (KL) expansion
(to three terms, following MLMCMC \cite{MLMCMC}).  These three parameters ($\theta$)
have a standard normal prior and we sample
from their posterior.

Computing the likelihood involves solving a partial differential equation
(PDE) with known boundary conditions for a given $\theta$ and comparing the
results for $p$ at the observation points.  The fidelities correspond to
different grid resolutions for the PDE solver: $120 \times 120$ (highest),
$30\times 30$, and $10\times 10$ (coarsest).

\paragraph{Results}

Previous work reports that $M = 5$ achieves the best trade-off between effective sample size and computation time for this experimental setup \cite{doi:10.1137/22M1476770}. Therefore, we adopt the same value to ensure a fair comparison with our method. Table~\ref{tab:hydro-append} summarizes the same statistics for this model with the same set-up as the pendulum experiments. Figure~\ref{fig:hydro-figs} shows ESS
(across all 10 chains) as a function of computation time for each method. In terms of ESS/s, our method improves over the standard MCMC and outperforms the
multilevel methods for the same amount of likelihood computational budget, especially in the tail of the distribution. All methods converge to similar means of the parameters with low standard deviation among
chains. 

We note that MFMC collects fewer number of samples compared to other methods since each sample requires multiple log likelihood calculations in the same high fidelity to update $K$, leading to significant add up of computational costs.  
For MFMC, the estimated mean ESS/s for
bulk and tail for parameters $[\theta_1, \theta_2, \theta_3]$ are $[0.128,0.081,0.710]$ and $[0.227,0.161,1.107]$.  But, again, the samples from MFMC were never intended to
be interpreted as from the true distribution.

\subsection{Structure Formation in the Universe with N-body Gravitational Simulation}

An important problem in modern-day cosmology is to generate theoretical models of the Universe on very large
scales (tens of $\mathrm{Mpc}$ across) that can be compared to observations. Bayesian inference 
allows cosmologists to measure quantities of fundamental physics significance, such as the
nature of dark energy and dark matter \cite{Peebles:1980}. The theoretical models needed for next generation
telescopes, such as \textsc{euclid} 
\cite{Euclid:2018} and the
Roman Space Telescope (\textsc{wfirst}) \cite{Spergel:2013}, are based on expensive numerical simulations,
some of which require many days of computer time for each evaluation. For such a computationally expensive
model, we show the efficacy of \flarename\ as compared with the standard Metropolis-Hastings algorithm and MLDA. 

\renewcommand{\floatpagefraction}{.8}
\renewcommand\bottomfraction{0.8}

One of the most frequently used summary statistics 
is the
galaxy power spectrum, $\boldsymbol{P}_\mathrm{gg}$ (a bold
$\boldsymbol{P}$ is a power spectrum, not a distribution): 
the two-point clustering of galaxies in Fourier space as a function of the
wavenumber scale, $k$. We use a slightly simplified model for the galaxy
power spectrum for (relative) ease of computation. We perform a forward
simulation which starts from a given set of cosmological parameters and
predicts the galaxy power spectrum. It works by following the
evolution of the Universe under the influence of gravity, from its
beginnings in an almost uniform density state to the diverse collection of
galaxies sitting in dark matter potentials observed today.

We sample from the posterior density of four cosmological parameters: 
\begin{itemize}
	\item[$\theta_1$] The dimensionless Hubble constant, $h$, which
		characterizes the Universal expansion rate and thus the recession
		velocity of distant galaxies. A redshift zero galaxy at distance
		$d$ $\mathrm{Mpc}$ recedes at a speed $v = H_0 d$, where $H_0 = h \times 100 \,\mathrm{km\,s}^{-1}\mathrm{Mpc}^{-1}$. Measuring $h$ is of importance to understand dark energy.
	\item[$\theta_2$] The dimensionless total matter density, $0 < \Omega_0 < 1$. $\Omega_0$ is the energy density of matter as a function of the critical density. $\Omega_0$ is important because it can be used to infer the density of dark matter.
	\item[$\theta_3$] The dimensionless scalar perturbation amplitude,
		$A_s$, of the primordial fluctuations at
		the wavenumber $k = 0.05 \,\mathrm{Mpc}^{-1}$. $A_s$ is of
		interest because it connects to the uncertain high energy physics
		of the Early Universe. Larger values of $A_s$ correspond to a
		clumpier early Universe and so lead to larger $\boldsymbol{P}_\mathrm{gg}$.
	\item[$\theta_4$] The dimensionless linear bias, $b$, which is used to shift the amplitude of our simulated matter power spectrum to match the amplitude of the galaxy power spectrum. This is to account for the difference between observed galaxies and dark matter (which is used by the forward model):
    \begin{equation}
      \boldsymbol{P}_\mathrm{model}(\theta)
      = b^2 \cdot \boldsymbol{P}_\mathrm{dm}(h, \Omega_0, A_s),
    \end{equation}
where $b$ is the scale-independent linear bias and
		$\boldsymbol{P}_\mathrm{dm}$ is the simulated dark-matter power
		spectrum directly computed from the output density field of
		FastPM.
\end{itemize}

The posterior density is conditioned on the galaxy power spectrum from SDSS-III Baryon Oscillation Spectroscopic Survey (BOSS) Data Release 12 (DR12) as our observational data source \cite{BOSS:2013,BOSS:2017}.
We have used a subset of the BOSS data from the North Galactic Cap (NGC) at $z = 0.38$, which includes $\sim
10^6$ galaxies, from Ivanov et al. \cite{Ivanov:2020}.

The likelihood function is a multivariate Gaussian between the galaxy power
spectrum from BOSS, $\boldsymbol{P}_\mathrm{gg}$, and the galaxy power
spectrum from the forward model, $\boldsymbol{P}_\mathrm{model}(\theta)$:
\begin{equation}
   \ln \mathcal{L}(\theta) =
   - \frac{1}{2}
   (\boldsymbol{P}_\mathrm{model}(\theta) - \boldsymbol{P}_\mathrm{gg})^\intercal \boldsymbol{\mathrm{C}}^{-1}
   (\boldsymbol{P}_\mathrm{model}(\theta) - \boldsymbol{P}_\mathrm{gg}) + k.
\end{equation}
$\boldsymbol{\mathrm{C}}$ is the covariance matrix of the galaxy power spectrum, also estimated observationally.

The most expensive part of the forward model, evolution under gravitational force, is computed using FastPM
\cite{Feng:2016}. FastPM has a couple of tunable fidelity parameters.
The size of the region simulated controls the amount of data available and may have a non-linear effect on the accuracy of the result.
We thus fix the size of this region to 1024 $\mathrm{Mpc}/\mathrm{h}$ and instead change the number of particles. More particles in the simulation mean higher resolution, more accurate power spectrum at higher wavenumber $k$, and thus the likelihood function is higher fidelity.
For N-body simulations, the compute time usually scales as $N \log{N}$, where $N$ is the number of particles.
Thus a simulation with a $512^3$ number of particles is $\simeq 80$ times more expensive than a $128^3$ simulation. We therefore set the fidelities by only adjusting the number of particles used in the simulation to be $512$ (highest), $384$, and
$256$ (coarsest).
Note this calculation is distributed across 20 cores (using MPI) and therefore a saving of 1 hour corresponds to 20 core-hours.

\begin{figure}
	\resizebox{\linewidth}{!}{\input{new-plots/mean-grid-new.pgf}}
\caption{Average (across 10 chains) absolute error in
	mean estimates for the four parameters, as a function of total simulation
	time.  In the allotted 48 hours, the chains each sampled 800 samples for (plain) MCMC, 600 samples for \flarename\ (single), 500 samples for \flarename\ (double), 290 samples for MLDA (single) and 200 samples for MLDA (double)
	}
\label{fig:cosmoplot}
\end{figure}

We compare our estimated distributional means to previous computations
on the same data.  For the parameters $h$ and $\Omega_0$, we compare to
the means reported by Ivanov et al. \cite{Ivanov_2020} on the same data using their
own MCMC simulation ($h = 0.661$ and $\Omega_m = 0.290$).  We have only a single linear bias term, compared
with the multiple such terms of Ivanov et al. \cite{Ivanov_2020}.  Therefore, we can
compare neither $b$ nor $A_s$ (which is heavily related to $b$) to their
results.  Instead, we measure $A_s$ against the best fit value from the Planck
Satellite \cite{plank_2020}, $A_s = 2.09$ and $b = 2$, consistent with comparable BOSS measurements \cite{Ivanov_2020}.
While these are modes (and not means), they
are the best independent estimates we can obtain.

\begin{figure}
	\resizebox{\linewidth}{!}{\input{new-plots/trace-new.pgf}}
\caption{Cosmology model; Trace plot for a random single run of each method}
\label{fig:cosmo-trace}
\end{figure}

\paragraph{Results}

Extreme running time dictated smaller values for $M$ for this experiment. We reduced them by a factor of 2 (approximately) and used $M = 2$ for inner substeps. Figure \ref{fig:cosmoplot} shows that the \flarename\ methods converge to the
mean values from previous literature better than the standard Metropolis
Hastings method using fewer samples and less time.
The $A_s$ parameter has slightly strange behavior.  We can still see better
convergence of our methods.  However, note that the best-fit value of $A_s$
we are taking as ``ground truth'' is measured (with error) from a different dataset,
and thus is likely not the true mean of our posterior. Many large scale structure experiments prefer a lower value of this parameter than Planck, a feature known as the S8 tension \cite{Abdalla_2022}. 
The pairwise plots of
the posterior 
can be found in the Appendix. From the posteriors, it is clear that \flarename\ is
better at approximating the modes of the distribution as compared to MCMC.
Figure \ref{fig:cosmo-trace} shows the trace plot for a random run of MCMC and \flarename. Our method shows better mixing than MCMC and is less likely to reject proposed samples.

\section{Summary}

Many scientific and engineering problems involve simulations or solving differential equations. In this paper,
we present an efficient multi-fidelity layered MCMC that exploits the ability to reduce the accuracy of
models leading to approximations of the posterior. In a recursive, nested fashion, these approximations act as proposals for MCMC-based inference. We add layer
tuning that successfully encourages the approximate proposals to explore the distribution well. We demonstrate with experimental results using models from three
different scientific domains with varying costs that out method, \flarename, is simple, and yet produces more efficient samples than existing adaptive multilevel MCMC methods with the same computational budget.






\appendix

\section{Convergence Rate Proofs}
\label{proof:app-conv-rates}

We show proofs for convergence rates in the main paper here. We first use Lemma \ref{lem:app-mininduction} to show that after $M$ steps of a coarse chain, we can obtain a minorized lower bound that can be recursively used in its finer layer. 


\begin{lemma}
\label{lem:app-mininduction}
    Let $\transp{j}{}{\cdot}{\cdot}$ be the transition distribution of the Markov chain at level $j$ with an invariant target distribution $\fullpi{j}{\cdot}$. For any level $j$, if there exists a $\xi_j > 0$ such that $\transp{j}{1}{\theta_j^0}{\theta_j^1} \geq \xi_j \pi_j(\theta_j^{1})$ for all $\theta_j^0, \theta_j^1 \in \Theta$, then
    \begin{equation}
        \transp{j}{M}{\theta^0_j}{\theta^M_j} \geq \left(1 - (1 - \xi_j)^M\right) \fullpi{j}{\theta^M_j}
        \quad\quad \forall \theta^0_j, \theta^M_j \in \Theta
        \label{eqn: min cond}
    \end{equation}
\end{lemma}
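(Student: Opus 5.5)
We argue by induction on $M$, using the standard splitting of a minorized kernel into a regeneration part and a residual part. Write $p_j = p_j^1$ and set $\xi=\xi_j$. The hypothesis $\transp{j}{1}{\theta}{\theta'} \ge \xi\,\fullpi{j}{\theta'}$ for all $\theta,\theta'$ lets us decompose the one-step kernel as
\begin{equation*}
\transp{j}{1}{\theta}{\theta'} = \xi\,\fullpi{j}{\theta'} + (1-\xi)\,R(\theta,\theta'),
\qquad
R(\theta,\theta') = \frac{\transp{j}{1}{\theta}{\theta'} - \xi\,\fullpi{j}{\theta'}}{1-\xi}.
\end{equation*}
Here $R$ is a nonnegative transition kernel: it is nonnegative by the minorization, and it integrates to one in $\theta'$. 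If $\xi=1$ the claim is immediate, so we assume $\xi<1$. The key structural fact is stationarity: $\int \fullpi{j}{\theta}\,\transp{j}{1}{\theta}{\theta'}\,d\theta = \fullpi{j}{\theta'}$. It follows that $\pi_j R = \pi_j$ as well.

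The claim to prove by induction is the exact representation
\begin{equation*}
\transp{j}{M}{\theta}{\cdot} = \bigl(1-(1-\xi)^M\bigr)\,\fullpi{j}{\cdot} + (1-\xi)^M R^M(\theta,\cdot).
\end{equation*}
For $M=1$ this is the decomposition above. For the inductive step we compose with one more step of $p_j$. The $\pi_j$ part stays $\pi_j$ because $\pi_j p_j=\pi_j$. The residual $(1-\xi)^M R^M(\theta,\cdot)\,p_j$ splits into $(1-\xi)^M\xi\,\pi_j$, using $\int R^M(\theta,d\theta'')=1$, plus $(1-\xi)^{M+1}R^{M+1}$. Summing the $\pi_j$ coefficients gives
\begin{equation*}
1-(1-\xi)^M+\xi(1-\xi)^M = 1-(1-\xi)^{M+1},
\end{equation*}
which is the required coefficient at step $M+1$.

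Dropping the nonnegative residual term then yields \eqref{eqn: min cond}, and the bound is uniform in $\theta^0_j$ and $\theta^M_j$.

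The main point requiring care is interpreting the inequality. At the level of measures the argument is clean. If $p_j$ has an atom from rejection, a pointwise density statement needs to be read as an inequality of measures, or as holding for the absolutely continuous part, since the mass at the current state only adds to the left-hand side.

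An alternative, simpler route avoids $R$ altogether. Since $p_j^{M}\ge p_j^{M-1}p_j \ge \xi\pi_j$ directly, one already has the weaker constant $\xi$. The constant $1-(1-\xi)^M$ is stronger, so the full induction above is what is required.
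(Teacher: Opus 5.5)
Your proof is correct and follows essentially the same route as the paper's: induction on the number of steps, writing the $k$-step kernel as a $\pi_j$-component plus a residual, then pushing forward one step, with stationarity handling the $\pi_j$-part and the one-step minorization handling the residual. Your exact identity $p_j^M = \bigl(1-(1-\xi_j)^M\bigr)\pi_j + (1-\xi_j)^M R^M$ is slightly cleaner than the paper's version, which carries a generic coupling probability $\phi^k \ge 1-(1-\xi_j)^k$ and substitutes this lower bound before the residual term has been minorized, a step your formulation avoids.
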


\newcommand\resid[3]{r_{#1}(#2\mid #3)}

\begin{proof}
    We prove this using induction. We can verify the base case for $k = 1$ such that $\transp{j}{1}{\theta^0_{j}}{\theta^1_{j}} \geq (1 - (1 - \xi_{j})^1) \fullpi{j}{\theta^{k}_{j}} = \xi_{j} \fullpi{j}{\theta^1_{j}}$. This is held by the assumption made in the lemma. 
    
    Assume using the induction hypothesis that, $\transp{j}{k}{\theta^0_{j}}{\theta^k_{j}} \geq (1 - (1 - \xi_{j})^k) \fullpi{j}{\theta^{k}_{j}}$. We need to show that $\transp{j}{k+1}{\theta^0_{j}}{\theta^{k+1}_{j}} \geq (1 - (1 - \xi_{j})^{k+1})\fullpi{j}{\theta^{k+1}_{j}}$. \\ 

    Note $\transp{j}{k}{\theta^0_{j}}{\theta^k_{j}}$ can be written as $\transp{j}{k}{\theta^0_{j}}{\theta^k_{j}} = \phi^k \fullpi{j}{\theta^k_{j}} + (1 - \phi^k) \resid{j}{\theta^k_{j}}{\theta^0_{j}}$,
    where $\phi^k$ is the probability that the chain couples to the stationary distribution in $k$ steps, and $\resid{j}{\theta^k_{j}}{\theta^0_{j}}$ is the
    remaining distribution that depends on $\theta^0_{j}$.
    \begin{align*}
        \transp{j}{k+1}{\theta^0_{j}}{\theta^{k+1}_{j}} &= \int \left[\transp{j}{k}{\theta^0_{j}}{\theta^k_{j}} \cdot \transp{j}{1}{\theta^k_{j}}{\theta^{k+1}_{j}} \right] d \theta^k_{j} \\
        &= \int \transp{j}{1}{\theta^{k}_{j}}{\theta^{k+1}_{j}} \left[ \phi^k \fullpi{j}{\theta^k_{j}} + (1- \phi^k) \resid{j}{\theta ^k_{j}}{\theta ^0_{j}} \right] d\theta^k_{j} \\ 
        &= \int \phi^k \fullpi{j}{\theta^k_{j}} \transp{j}{1}{\theta^{k}_{j}}{\theta^{k+1}_{j}}  d \theta^k_{j} + \int (1- \phi^k) \resid{j}{\theta ^k_{j}}{\theta ^0_{j}} \transp{j}{1}{\theta^{k}_{j}}{\theta^{k+1}_{j}} d \theta^k_{j} \\
	   \intertext{We know $\phi^k \geq 1 - (1 - \xi_{j})^k$. Thus,}
        &\geq (1 - (1 - \xi_{j})^k) \fullpi{j}{\theta^{k+1}_{j}} + (1 - \xi_{j})^k \int \resid{j}{\theta ^k_{j}}{ \theta ^0_{j}} \transp{j}{1}{\theta^{k}_{j}}{\theta^{k+1}_{j}} d \theta^k_{j}  \\
	   \intertext{Replacing $\transp{j}{1}{\theta^{k}_{j}}{\theta^{k+1}_{j}}$ between two consecutive samples with the base assumption, }
        & \geq (1 - (1 - \xi_{j})^k) \fullpi{j}{\theta^{k+1}_{j}} + \xi_{j} (1 - \xi_{j})^k \int \resid{j}{\theta ^k_{j}}{\theta ^0_{j}} \fullpi{j}{\theta^{k+1}_{j}} d \theta^k_{j} \\
        &= ( 1 - (1 - \xi_{j})^k )\fullpi{j}{\theta^{k+1}_{j}} + \xi_{j} (1 - \xi_{j})^k \fullpi{j}{\theta^{k+1}_{j}} \\
        &= (1 - (1 - \xi_{j})^{k+1}) \fullpi{j}{\theta^{k+1}_{j}}
        \intertext{Therefore using proof by induction, we have that $ \transp{j}{M}{\theta^0_{j}}{\theta^M_{j}} \geq (1 - (1 - \xi_{j})^M) \fullpi{j}{\theta^{M}_{j}}$.}
    \end{align*}

\end{proof} 
\begin{proof}[Proof of Lemma \ref{lem:minlowerbound}]
The transition kernel is given by
\begin{align*}
    \transp{j}{1}{\theta^{i}_j}{\theta^{i+1}_j} &= \mathcal{A}_j(\theta^{i}_j \rightarrow \theta^{i+1}_j) \cdot \fullprop{j}{\theta^{i+1}_j}{\theta^i_j}
    	+ \delta(\theta^{i+1}_j - \theta^i_j) \int  \left(1-\mathcal{A}_j(\theta^{i}_j \rightarrow \theta'_j)\right) \fullprop{j}{\theta'_j} {\theta^i_j}\, d\theta'_j\\
    &\geq \mathcal{A}_j(\theta^{i}_j \rightarrow \theta^{i+1}_j) \cdot \fullprop{j}{\theta^{i+1}_j}{\theta^i_j} \\
    &= \mathcal{A}_j(\theta^{i}_j \rightarrow \theta^{i+1}_j) \cdot \transp{j+1}{M}{\theta^{0}_{j+1}}{\theta^M_{j+1}} \\
    \intertext{The sample, $\theta_j^{i+1}$ is proposed using the Mth sample from the $j+1$ chain, therefore is the same as $\theta^M_{j+1}$. Thus, from Lemma \ref{lem:app-mininduction}, } 
	&\geq \mathcal{A}_j(\theta^{i}_j \rightarrow \theta^{i+1}_j) \cdot (1 - (1 - \xi_{j+1})^M) \cdot \fullpi{j+1}{\theta^{i+1}_j}\\
    &= \min \left(1, \frac{\fullpi{j}{\theta^{i+1}_j}}{\fullpi{j}{\theta^{i}_j}} \cdot \frac{\fullpi{j+1}{\theta^{i}_j}}{\fullpi{j+1}{\theta^{i+1}_j}}\right) \cdot (1 - (1 - \xi_{j+1})^M )\cdot \fullpi{j+1}{\theta^{i+1}_j}\\
    \intertext{Let $r(\theta) = \frac{\fullpi{j+1}{\theta}}{\fullpi{j}{\theta}}$. Then,} 
	&= \min \left(1, \frac{r(\theta^{i}_j)}{r(\theta^{i+1}_j)} \right) \cdot (1 - (1 - \xi_{j+1})^M) \cdot r(\theta^{i+1}_j) \cdot \fullpi{j}{\theta^{i+1}_j} \\
    &= (1 - (1 - \xi_{j+1})^M) \cdot \min \left( r(\theta^{i+1}_j), r(\theta^i_j)\right) \cdot \fullpi{j}{\theta^{i+1}_j} \\
    & \geq (1 - (1 - \xi_{j+1})^M) \cdot \min_\theta \left( r(\theta) \right) \cdot \fullpi{j}{\theta^{i+1}_j}\\
    & = \xi_j \cdot \fullpi{j}{\theta^{i+1}_j}
    \intertext{ where $\xi_j = (1 - (1 - \xi_{j+1})^M) \cdot \min\limits_{\theta} \left( \frac{\fullpi{j+1}{\theta}}{\fullpi{j}{\theta}} \right)$.   }
\end{align*}
\end{proof}

\section{Optimal M proof}
\label{proof:app-optM}

\begin{proof}[Proof of Lemma \ref{lem:optimal_M}]

Since the ratio $\min\limits_{\theta} \left( \frac{\pi_{j+1}(\theta)}{\pi_j(\theta)} \right)$ is constant with respect to $M_j$, we simplify the objective function to be maximized as: 
\begin{equation*}
    f(M_j) = \frac{1-(1-\xi_{j+1})^{M_j}}{b_j+M_jB_{j+1}}, \qquad M_j\ge0,\quad \xi_{j+1}\in(0,1),\quad b_j,B_{j+1}>0.
\end{equation*}

We define
\begin{equation*}
    c:=1-\xi_{j+1}\in(0,1),\qquad \beta:=\frac{b_j}{B_{j+1}},\qquad \Upsilon:=-\log c>0 \qquad\text{and}\qquad \mu := \beta+\frac{1}{\Upsilon}.
\end{equation*}

Then, 
\begin{equation*}
    f(M_j)=\frac{1-c^{M_j}}{\beta B_{j+1}+M_{j}B_{j+1}}=\frac{1-c^{M_j}}{B_{j+1}(\beta+M_{j})}.
\end{equation*}

Differentiating the objective function, 
\begin{equation*}
    f'(M_j)=\frac{-c^{M_j}\log c\,(\beta+M_j)- (1-c^{M_j})}{(\beta+M_j)^2*B_{j+1}}.
\end{equation*}

Setting the derivative to zero and using $\Upsilon = - \log c$, 
\begin{equation*}
    -c^{M_j}\log c\,(\beta+M_j)=1-c^{M_j}
\quad\Longrightarrow\quad
c^{M_j}\big(1+\Upsilon(\beta+M_j)\big)=1.
\end{equation*}

Since $c^{M_j} = e^{\log(c^{M_j})} = e^{M_j\log c} = e^{M_j(-\Upsilon)}$, we replace $c^{M_j}=e^{-\Upsilon M_j}$ to obtain
\begin{equation*}
    \big(1+\Upsilon(\beta+M_j)\big)e^{-\Upsilon M_j}=1.
\end{equation*}

By the definition of $\mu$, we have $1+\Upsilon(\beta+M_j)=\Upsilon\mu+\Upsilon M_j$. Therefore, 
\begin{equation*}
    (\Upsilon\mu+\Upsilon M_j)e^{-\Upsilon M_j}=1.
\end{equation*}

Let $y = \Upsilon \mu + \Upsilon M_j$. Then, $\Upsilon M_j = y - \Upsilon \mu$, substituting this gives 

\begin{align*}
    ye^{-(y-\Upsilon\mu)} &= 1 \\
    ye^{-y} &= e^{-\Upsilon\mu} \\
    - ye^{-y} &= -e^{-\Upsilon\mu}.
\end{align*}

Using the Lambert W function for branch $k = -1$,
\begin{equation}
        y = -W_{-1}\big(-e^{-\Upsilon\mu}\big).
\end{equation}

Since  $\Upsilon M_j = y - \Upsilon\mu$, we get
\begin{equation*}
    M_j = \frac{y}{\Upsilon}-\mu = -\frac{1}{\Upsilon}W_{-1}\big(-e^{-\Upsilon\mu}\big)-\mu.
\end{equation*}

Therefore, the maximizer for our objective function is 
    \begin{equation}
            M_j^* = -\frac{1}{\Upsilon}W_{-1}\big(-e^{-\Upsilon\mu}\big) - \mu
    \end{equation}

    where $\Upsilon = -\log(1-\xi_{j+1})$, $\mu = \frac{b_j}{B_{j+1}} + \frac{1}{\Upsilon}$ and $W_{-1}$ is $-1$ branch of the Lambert $W$ function.

\end{proof}

\section{Layer Tuning Ergodicity Proof}
\label{proof:app-layer-tuning}

\begin{proof}[Proof of Lemma \ref{lem:layertuninglem}]
\begin{enumerate}[label=(\alph*)]
    \item 
    Consider layer $J-1$. Using Theorem \ref{thm: conv-thm}, 
    \begin{align*}
\left\| \transp{\capJ-1, \gamma_{\capJ-1}}{M}{\theta}{\cdot} - \psi_{\capJ-1}(\cdot) \right\|
&\leq (1 - \xi_{\capJ-1})^M, \quad
\xi_{\capJ-1} = (1 - (1 - \xi_\capJ)^M)
\min_\theta \frac{\psi_\capJ(\theta)}{\psi_{\capJ-1}(\theta)} \\[2mm]
\intertext{From the definition of layer tuning,}
&= (1 - (1 - \xi_\capJ)^M)
\min_\theta 
\frac{ (\fullpimod{\capJ}{\theta} + \omega_\capJ) 
       \cdot \zetap{\capJ-1}{\omega_{\capJ-1}} }
     { (\fullpimod{\capJ-1}{\theta} + \omega_{\capJ-1}) 
       \cdot \zetap{\capJ}{\omega_{\capJ}} } \\[1mm]
&= (1 - (1 - \xi_\capJ)^M)
\min_\theta 
\frac{ \fullpimod{\capJ}{\theta} + \omega_\capJ }
     { \fullpimod{\capJ-1}{\theta} + \omega_{\capJ-1} } \\
&\quad \times 
\frac{ Z + \omega_{J-1}\cdot V }{ Z + \omega_J \cdot V } \\[1mm]
\shortintertext{where $\zeta$ is the normalizing constant of the new distribution  that depends on $\omega$, $Z$ is the
		normalizing constant of the original distribution, and $V$ is the volume of $\Theta$. With the bounds for $\omega$ from the assumption,}
&\geq (1 - (1 - {\xi_\capJ})^M)\min_\theta \left( \frac{\fullpimod{\capJ}{\theta} + \underline{\omega}}{\fullpimod{\capJ-1}{\theta} +
		\overline{\omega}} \right) \left (\frac{Z + \underline{\omega}\cdot V}{Z + \overline{\omega} \cdot V} \right) \\
&\triangleq \overline{\xi}_{\capJ-1}
\end{align*}

    By induction with base case at layer $J$, 
    \begin{equation}
    \label{eqn:ergodicityadapt}
        \forall j, \left\|\transp{j, \gamma_j}{n}{\theta}{\cdot} - \psi_j(\cdot) \right\| \leq (1 - \xi_j)^n \text{ where } \xi_j \geq  \overline{\xi}_{j}\,\,.
    \end{equation}

      We need to show that for all $\tau > 0$. there  exists $n = n(\tau) \in \mathbb{N}$ such that 
    \begin{equation*}
       \left\|\transp{j, \gamma_j}{n}{\theta}{\cdot} - \psi_j(\cdot) \right\| \leq \tau
    \end{equation*} for all $\theta \in \mathcal{X}_j$ and $\gamma_j \in \Gamma_j$.

    From Equation \ref{eqn:ergodicityadapt}, we want
    \begin{align*}
        (1 - \xi_j)^n & \leq \tau \\
        n & \geq \frac{\ln \tau}{\ln (1 - \xi_j)} 
        \intertext{Since $\ln (1 - \xi_j)  \leq \ln (1 - \overline{\xi_j})$, }
         n & \geq \frac{\ln \tau}{\ln (1 -\overline{\xi_j})} \,\,.
    \end{align*}
    Thus, for all $\tau>0$, there exists $n = \max\limits_\tau \frac{\ln \tau}{\ln (1 -\overline{\xi_j})}$ such that
        $\left\|\transp{\capJ, \gamma_{\capJ}}{n}{\theta}{\cdot} - \psi_j(\cdot) \right\| \leq \tau$.
    \item     At every step $t$, the change in $\gamma_j$ maps to change in $\smooth{j+1}$. Diminishing adaptation is guaranteed by a gradient descent algorithm
    with diminishing stepsize that updates $\omega_{j+1}$ at each layer to minimize the Kullback-Leibler divergence between layers $\psi_j$ and $\psi_{j+1}$. At each step of the GD algorithm, $\smooth{j+1}$ is updated as $\smooth{j+1}^{t+1} = \smooth{j+1}^{t} - \eta_i \smoothder{j+1}\obj{j+1}$. To get diminishing adaptation, the update needs to converge as
    \begin{align*}
        \lim_{t \rightarrow \infty} \left\|\eta_t \smoothder{j+1}\obj{j+1}\right\| \approx 0. 
    \end{align*}

    Since we bound $\gamma_j \leftrightarrow \omega_{j+1}$ away from zero, 
    \begin{align*}
        \smoothder{j+1}\obj{j+1} &= \frac{1}{\fullpimod{j+1}{\theta_{j+1}^0} + \smooth{j+1}} - \frac{1}{\fullpimod{j+1}{\theta_{j+1}^M} + \smooth{j+1}} \\
        & \leq \frac{1}{\fullpimod{j+1}{\theta_{j+1}^0} + \smooth{j+1}} \\ 
        & \leq \frac{1}{\fullpimod{j+1}{\theta_{j+1}^0}  + \underline{\omega}} \\ 
        & \leq \frac{1}{\underline{\omega}} \\ \intertext{Therefore, the update is}
        \lim_{t \rightarrow \infty} \left\|\eta_t \smoothder{j+1}\obj{j+1}\right\| &\leq \lim_{t \rightarrow \infty} \left\|\eta_t
	   \frac{1}{\underline{\omega}}\right\| \\
        &\leq \frac{1}{\underline{\omega}} \lim_{t \rightarrow \infty} \eta_t
        \intertext{If the stepsize, $\eta_t$ asymptotes to $0$, adapation decreases to $0$ as $t\rightarrow\infty$.  Therefore 
    $\lim\limits_{t\rightarrow\infty} \left\|\smooth{j+1}^{t} - \smooth{j+1}^{t+1} \right\| = 0$, and
    thus
        $\lim\limits_{t \rightarrow \infty} \sup\limits_{\theta} \left\| \transp{j, \gamma_j^t}{}{\theta}{\cdot} - \transp{j, \gamma_j^{t+1}}{}{\theta}{\cdot} \right\| = 0$.
}
    \end{align*}

\end{enumerate}
\end{proof}

\section{Uniform Smoothing Parameter}

Plotted in Figure \ref{fig:smooth} is the evolution of our tuning parameter $\smooth{j}$ as a function of samples collected for one example chain of the doubly nested method for the pendulum model. Each sample at $j = 1$ starts a chain of length $M = 5$ at inner layer $j=2$. The inner layer $j = 2$ uses the small angle approximation of the pendulum as the fidelity. Since it is a poor approximation of the posterior as shown in Fig \ref{fig:post}, we start with a relatively high value of $\smooth{}$. This helps the coarsest layer better explore the high-probability regions. As shown, the tuning parameter $\smooth{}$ converges close to zero after a few samples in both the layers. We use a learning rate of $10^{-3}$ for both the layers. 
\begin{figure}[h!]
	{\centering
	\scalebox{1}{\input{new-plots/rho.pgf}}

	}
\caption{Evolution of $\smooth{j}$ for doubly nested layers $j = 1$ and $j = 2$}
\label{fig:smooth}
\end{figure}

\section{Computational Infrastructure}
Our experiments were performed on a machine with 4 Intel\textregistered\
Xeon\textregistered\ Silver 4214
CPUs running at 2.20GHz for our experiments (a total of 48 cores).  The
machine has 250GB of memory, but memory was never a restriction during our
experiments.

All methods use multiproccessing, that is, each chain is run in parallel
using a different core. The time listed is across one run of a single chain;
however, the effective sample size is calculated across 10 different
chains.

For the pendulum and hydrology models, likelihood calculations were carried
out on a single core.  For the cosmology model, the likelihood calculations
were carried out in parallel across 20 cores.  Therefore, for the cosmology
experiments, saving a day's worth of computation time on the graphs
corresponds to saving 20 days worth of core-hours.

\section{Pairwise Plot for Cosmology Model}

Plotted in Figure \ref{fig:pairwise-mcmc} is the pairwise plot for all four parameters of the cosmology model. MCMC generates more samples in the same period of time, yet these samples have not yet converged to the distribution and are still scattered across the space, compared with the relatively compact \flarename{} samples. 

\begin{figure}[p]
\includegraphics[width=\columnwidth]{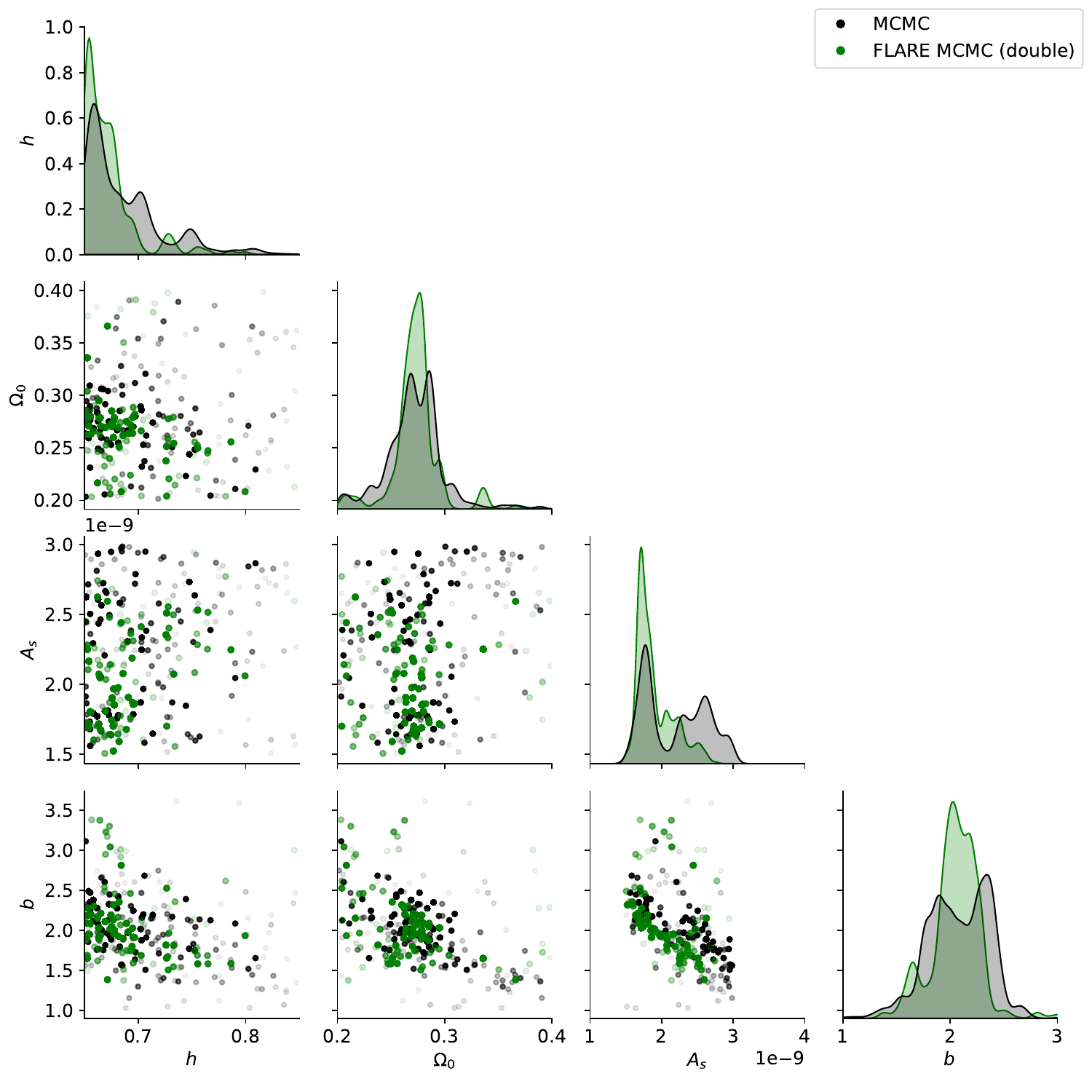}
\caption{Cosmology model: Pairwise plots for all four parameters. Plotted for a random run of samples collected for 48 hours of MCMC (black) and \flarename{} (green).}
\label{fig:pairwise-mcmc}
\end{figure}

\clearpage
\bibliography{references}

@preamble{"\newcommand\mnras{Monthly Notices of the Royal Astronomical Society}"}

@preamble{"\newcommand\aj{Astronomical Journal}"}

@preamble{"\newcommand\apj{Astrophysical Journal}"}

@preamble{"\newcommand\jcap{Journal of Cosmology and Astroparticle Physics}"}

@Article{DORMAND198019,
title = {A family of embedded {R}unge-{K}utta formulae},
journal = {Journal of Computational and Applied Mathematics},
volume = {6},
number = {1},
pages = {19-26},
year = {1980},
issn = {0377-0427},
doi = {https://doi.org/10.1016/0771-050X(80)90013-3},
url = {https://www.sciencedirect.com/science/article/pii/0771050X80900133},
author = {J.R. Dormand and P.J. Prince}
}

@article{DAMCMC,
  author    = {J. Andr{\'e}s Christen and Colin Fox},
  title     = {Markov chain Monte Carlo Using an Approximation},
  journal   = {Journal of Computational and Graphical Statistics},
  volume    = {14},
  number    = {4},
  pages     = {795--810},
  year      = {2005},
  publisher = {Taylor \& Francis},
  doi       = {10.1198/106186005X76983},
  url       = {https://doi.org/10.1198/106186005X76983},
  eprint    = {https://doi.org/10.1198/106186005X76983}
}

@article{MHMCMC1,
 ISSN = {00063444},
 URL = {http://www.jstor.org/stable/2334940},
 author = {W. K. Hastings},
 journal = {Biometrika},
 number = {1},
 pages = {97--109},
 publisher = {[Oxford University Press, Biometrika Trust]},
 title = {{M}onte {C}arlo Sampling Methods Using {M}arkov Chains and Their Applications},
 urldate = {2022-08-08},
 volume = {57},
 year = {1970}
}

@misc{MLDA,
  doi = {10.48550/ARXIV.2012.05668},
  
  url = {https://arxiv.org/abs/2012.05668},
  
  author = {Lykkegaard, Mikkel B. and Mingas, Grigorios and Scheichl, Robert and Fox, Colin and Dodwell, Tim J.},
  
  title = {Multilevel Delayed Acceptance {MCMC} with an Adaptive Error Model in {PyMC3}},
  
  publisher = {arXiv},
  
  year = {2020},
  
  copyright = {Creative Commons Attribution 4.0 International}
}

@book{Ripley87,
  address = {New York, NY, USA},
  author = {Ripley, B. D.},
  isbn = {0-471-81884-4},
  publisher = {John Wiley \& Sons, Inc.},
  title = {Stochastic simulation},
  username = {dalbem},
  year = 1987
}

@article{MLMCMC,
author = {Dodwell, T. J. and Ketelsen, C. and Scheichl, R. and Teckentrup, A. L.},
title = {A Hierarchical Multilevel {M}arkov Chain {M}onte {C}arlo Algorithm with  Applications to Uncertainty Quantification in Subsurface Flow},
journal = {SIAM/ASA Journal on Uncertainty Quantification},
volume = {3},
number = {1},
pages = {1075-1108},
year = {2015},
doi = {10.1137/130915005},

URL = { 
        https://doi.org/10.1137/130915005
        },
eprint = { 
        https://doi.org/10.1137/130915005
        }
}

@misc{AEM,
author = {Cui, Tiangang and Fox, Colin and O'Sullivan, Michael},
year = {2012},
month = {12},
pages = {},
title = {Adaptive Error Modelling in {MCMC} Sampling for Large Scale Inverse Problems}
}

@article{HofGel14,
	author = {Matthew D. Hoffman and Andrew Gelman},
	title = {The No-{U}-Turn Sampler: Adaptively Setting Path Lengths in {H}amiltonian {M}onte {C}arlo},
	journal = {Journal of Machine Learning Research},
	year = 2014,
	volume = 15,
	pages = {1593--1623}
}

@article{Duaetal87,
	author = {Simon Duane and A.D. Kennedy and Brian J. Pendleton and Duncan Roweth},
	title = {Hybrid {M}onte {C}arlo},
	journal = {Physics Letters {B}},
	year = 1987,
	month = sep,
	pages = {216--222},
	volume = 195,
	number = 2
}

@book{Nea96,
	author = {Radford M. Neal},
	title = {{B}ayesian Learning for Neural Networks},
	publisher = {Springer New York, NY},
	year = 1996,
}

@ARTICLE{Ivanov:2020,
       author = {{Ivanov}, Mikhail M. and {Simonovi{\'c}}, Marko and {Zaldarriaga}, Matias},
        title = "{Cosmological parameters from the BOSS galaxy power spectrum}",
      journal = {\jcap},
         year = 2020,
        month = may,
       volume = {2020},
       number = {5},
          eid = {042},
        pages = {042},
          doi = {10.1088/1475-7516/2020/05/042},
archivePrefix = {arXiv},
       eprint = {1909.05277},
 primaryClass = {astro-ph.CO},
       adsurl = {https://ui.adsabs.harvard.edu/abs/2020JCAP...05..042I}
}

@ARTICLE{BOSS:2017,
        author = {{Alam}, Shadab and {Ata}, Metin and {Bailey}, Stephen and {Beutler}, Florian and {Bizyaev} and others},       
        fullauthor = {{Alam}, Shadab and {Ata}, Metin and {Bailey}, Stephen and {Beutler}, Florian and {Bizyaev}, Dmitry and {Blazek}, Jonathan A. and {Bolton}, Adam S. and {Brownstein}, Joel R. and {Burden}, Angela and {Chuang}, Chia-Hsun and {Comparat}, Johan and {Cuesta}, Antonio J. and {Dawson}, Kyle S. and {Eisenstein}, Daniel J. and {Escoffier}, Stephanie and {Gil-Mar{\'\i}n}, H{\'e}ctor and {Grieb}, Jan Niklas and {Hand}, Nick and {Ho}, Shirley and {Kinemuchi}, Karen and {Kirkby}, David and {Kitaura}, Francisco and {Malanushenko}, Elena and {Malanushenko}, Viktor and {Maraston}, Claudia and {McBride}, Cameron K. and {Nichol}, Robert C. and {Olmstead}, Matthew D. and {Oravetz}, Daniel and {Padmanabhan}, Nikhil and {Palanque-Delabrouille}, Nathalie and {Pan}, Kaike and {Pellejero-Ibanez}, Marcos and {Percival}, Will J. and {Petitjean}, Patrick and {Prada}, Francisco and {Price-Whelan}, Adrian M. and {Reid}, Beth A. and {Rodr{\'\i}guez-Torres}, Sergio A. and {Roe}, Natalie A. and {Ross}, Ashley J. and {Ross}, Nicholas P. and {Rossi}, Graziano and {Rubi{\~n}o-Mart{\'\i}n}, Jose Alberto and {Saito}, Shun and {Salazar-Albornoz}, Salvador and {Samushia}, Lado and {S{\'a}nchez}, Ariel G. and {Satpathy}, Siddharth and {Schlegel}, David J. and {Schneider}, Donald P. and {Sc{\'o}ccola}, Claudia G. and {Seo}, Hee-Jong and {Sheldon}, Erin S. and {Simmons}, Audrey and {Slosar}, An{\v{z}}e and {Strauss}, Michael A. and {Swanson}, Molly E.~C. and {Thomas}, Daniel and {Tinker}, Jeremy L. and {Tojeiro}, Rita and {Maga{\~n}a}, Mariana Vargas and {Vazquez}, Jose Alberto and {Verde}, Licia and {Wake}, David A. and {Wang}, Yuting and {Weinberg}, David H. and {White}, Martin and {Wood-Vasey}, W. Michael and {Y{\`e}che}, Christophe and {Zehavi}, Idit and {Zhai}, Zhongxu and {Zhao}, Gong-Bo},

        title = "{The clustering of galaxies in the completed SDSS-III Baryon Oscillation Spectroscopic Survey: cosmological analysis of the DR12 galaxy sample}",
      journal = {\mnras},
         year = 2017,
        month = sep,
       volume = {470},
       number = {3},
        pages = {2617-2652},
          doi = {10.1093/mnras/stx721},
archivePrefix = {arXiv},
       eprint = {1607.03155},
 primaryClass = {astro-ph.CO},
       adsurl = {https://ui.adsabs.harvard.edu/abs/2017MNRAS.470.2617A}
}

@ARTICLE{BOSS:2013,
        author = {{Dawson}, Kyle S. and {Schlegel}, David J. and {Ahn}, Christopher P. and {Anderson}, Scott F. and {Aubourg} and others},
       fullauthor = {{Dawson}, Kyle S. and {Schlegel}, David J. and {Ahn}, Christopher P. and {Anderson}, Scott F. and {Aubourg}, {\'E}ric and {Bailey}, Stephen and {Barkhouser}, Robert H. and {Bautista}, Julian E. and {Beifiori}, Alessandra and {Berlind}, Andreas A. and {Bhardwaj}, Vaishali and {Bizyaev}, Dmitry and {Blake}, Cullen H. and {Blanton}, Michael R. and {Blomqvist}, Michael and {Bolton}, Adam S. and {Borde}, Arnaud and {Bovy}, Jo and {Brandt}, W.~N. and {Brewington}, Howard and {Brinkmann}, Jon and {Brown}, Peter J. and {Brownstein}, Joel R. and {Bundy}, Kevin and {Busca}, N.~G. and {Carithers}, William and {Carnero}, Aurelio R. and {Carr}, Michael A. and {Chen}, Yanmei and {Comparat}, Johan and {Connolly}, Natalia and {Cope}, Frances and {Croft}, Rupert A.~C. and {Cuesta}, Antonio J. and {da Costa}, Luiz N. and {Davenport}, James R.~A. and {Delubac}, Timoth{\'e}e and {de Putter}, Roland and {Dhital}, Saurav and {Ealet}, Anne and {Ebelke}, Garrett L. and {Eisenstein}, Daniel J. and {Escoffier}, S. and {Fan}, Xiaohui and {Filiz Ak}, N. and {Finley}, Hayley and {Font-Ribera}, Andreu and {G{\'e}nova-Santos}, R. and {Gunn}, James E. and {Guo}, Hong and {Haggard}, Daryl and {Hall}, Patrick B. and {Hamilton}, Jean-Christophe and {Harris}, Ben and {Harris}, David W. and {Ho}, Shirley and {Hogg}, David W. and {Holder}, Diana and {Honscheid}, Klaus and {Huehnerhoff}, Joe and {Jordan}, Beatrice and {Jordan}, Wendell P. and {Kauffmann}, Guinevere and {Kazin}, Eyal A. and {Kirkby}, David and {Klaene}, Mark A. and {Kneib}, Jean-Paul and {Le Goff}, Jean-Marc and {Lee}, Khee-Gan and {Long}, Daniel C. and {Loomis}, Craig P. and {Lundgren}, Britt and {Lupton}, Robert H. and {Maia}, Marcio A.~G. and {Makler}, Martin and {Malanushenko}, Elena and {Malanushenko}, Viktor and {Mandelbaum}, Rachel and {Manera}, Marc and {Maraston}, Claudia and {Margala}, Daniel and {Masters}, Karen L. and {McBride}, Cameron K. and {McDonald}, Patrick and {McGreer}, Ian D. and {McMahon}, Richard G. and {Mena}, Olga and {Miralda-Escud{\'e}}, Jordi and {Montero-Dorta}, Antonio D. and {Montesano}, Francesco and {Muna}, Demitri and {Myers}, Adam D. and {Naugle}, Tracy and {Nichol}, Robert C. and {Noterdaeme}, Pasquier and {Nuza}, Sebasti{\'a}n E. and {Olmstead}, Matthew D. and {Oravetz}, Audrey and {Oravetz}, Daniel J. and {Owen}, Russell and {Padmanabhan}, Nikhil and {Palanque-Delabrouille}, Nathalie and {Pan}, Kaike and {Parejko}, John K. and {P{\^a}ris}, Isabelle and {Percival}, Will J. and {P{\'e}rez-Fournon}, Ismael and {P{\'e}rez-R{\`a}fols}, Ignasi and {Petitjean}, Patrick and {Pfaffenberger}, Robert and {Pforr}, Janine and {Pieri}, Matthew M. and {Prada}, Francisco and {Price-Whelan}, Adrian M. and {Raddick}, M. Jordan and {Rebolo}, Rafael and {Rich}, James and {Richards}, Gordon T. and {Rockosi}, Constance M. and {Roe}, Natalie A. and {Ross}, Ashley J. and {Ross}, Nicholas P. and {Rossi}, Graziano and {Rubi{\~n}o-Martin}, J.~A. and {Samushia}, Lado and {S{\'a}nchez}, Ariel G. and {Sayres}, Conor and {Schmidt}, Sarah J. and {Schneider}, Donald P. and {Sc{\'o}ccola}, C.~G. and {Seo}, Hee-Jong and {Shelden}, Alaina and {Sheldon}, Erin and {Shen}, Yue and {Shu}, Yiping and {Slosar}, An{\v{z}}e and {Smee}, Stephen A. and {Snedden}, Stephanie A. and {Stauffer}, Fritz and {Steele}, Oliver and {Strauss}, Michael A. and {Streblyanska}, Alina and {Suzuki}, Nao and {Swanson}, Molly E.~C. and {Tal}, Tomer and {Tanaka}, Masayuki and {Thomas}, Daniel and {Tinker}, Jeremy L. and {Tojeiro}, Rita and {Tremonti}, Christy A. and {Vargas Maga{\~n}a}, M. and {Verde}, Licia and {Viel}, Matteo and {Wake}, David A. and {Watson}, Mike and {Weaver}, Benjamin A. and {Weinberg}, David H. and {Weiner}, Benjamin J. and {West}, Andrew A. and {White}, Martin and {Wood-Vasey}, W.~M. and {Yeche}, Christophe and {Zehavi}, Idit and {Zhao}, Gong-Bo and {Zheng}, Zheng},
        title = "{The Baryon Oscillation Spectroscopic Survey of SDSS-III}",
      journal = {\aj},
         year = 2013,
        month = jan,
       volume = {145},
       number = {1},
          eid = {10},
        pages = {10},
          doi = {10.1088/0004-6256/145/1/10},
archivePrefix = {arXiv},
       eprint = {1208.0022},
 primaryClass = {astro-ph.CO},
       adsurl = {https://ui.adsabs.harvard.edu/abs/2013AJ....145...10D}
}

@ARTICLE{Feng:2016,
       author = {{Feng}, Yu and {Chu}, Man-Yat and {Seljak}, Uro{\v{s}} and {McDonald}, Patrick},
        title = "{FASTPM: a new scheme for fast simulations of dark matter and haloes}",
      journal = {\mnras},
         year = 2016,
        month = dec,
       volume = {463},
       number = {3},
        pages = {2273-2286},
          doi = {10.1093/mnras/stw2123},
archivePrefix = {arXiv},
       eprint = {1603.00476},
 primaryClass = {astro-ph.CO},
       adsurl = {https://ui.adsabs.harvard.edu/abs/2016MNRAS.463.2273F}
}

@BOOK{Peebles:1980,
       author = {{Peebles}, P.~J.~E.},
        title = "{The large-scale structure of the universe}",
         year = 1980,
       adsurl = {https://ui.adsabs.harvard.edu/abs/1980lssu.book.....P}
}

@ARTICLE{Spergel:2013,
        author = {{Spergel}, D. and {Gehrels}, N. and {Breckinridge}, J. and {Donahue}, M. and {Dressler}, A. and {Gaudi}, B.~S. and others },
       fullauthor = {{Spergel}, D. and {Gehrels}, N. and {Breckinridge}, J. and {Donahue}, M. and {Dressler}, A. and {Gaudi}, B.~S. and {Greene}, T. and {Guyon}, O. and {Hirata}, C. and {Kalirai}, J. and {Kasdin}, N.~J. and {Moos}, W. and {Perlmutter}, S. and {Postman}, M. and {Rauscher}, B. and {Rhodes}, J. and {Wang}, Y. and {Weinberg}, D. and {Centrella}, J. and {Traub}, W. and {Baltay}, C. and {Colbert}, J. and {Bennett}, D. and {Kiessling}, A. and {Macintosh}, B. and {Merten}, J. and {Mortonson}, M. and {Penny}, M. and {Rozo}, E. and {Savransky}, D. and {Stapelfeldt}, K. and {Zu}, Y. and {Baker}, C. and {Cheng}, E. and {Content}, D. and {Dooley}, J. and {Foote}, M. and {Goullioud}, R. and {Grady}, K. and {Jackson}, C. and {Kruk}, J. and {Levine}, M. and {Melton}, M. and {Peddie}, C. and {Ruffa}, J. and {Shaklan}, S.},
        title = "{Wide-Field InfraRed Survey Telescope-Astrophysics Focused Telescope Assets WFIRST-AFTA Final Report}",
      journal = {arXiv e-prints},
         year = 2013,
        month = may,
          eid = {arXiv:1305.5422},
        pages = {arXiv:1305.5422},
archivePrefix = {arXiv},
       eprint = {1305.5422},
 primaryClass = {astro-ph.IM},
       adsurl = {https://ui.adsabs.harvard.edu/abs/2013arXiv1305.5422S}
}

@ARTICLE{Euclid:2018,
author = {{Amendola}, Luca and {Appleby}, Stephen and {Avgoustidis}, Anastasios and
         {Bacon}, David and {Baker}, Tessa and others},
       fullauthor = {{Amendola}, Luca and {Appleby}, Stephen and {Avgoustidis}, Anastasios and
         {Bacon}, David and {Baker}, Tessa and {Baldi}, Marco and
         {Bartolo}, Nicola and {Blanchard}, Alain and {Bonvin}, Camille and
         {Borgani}, Stefano and {Branchini}, Enzo and {Burrage}, Clare and
         {Camera}, Stefano and {Carbone}, Carmelita and {Casarini}, Luciano and
         {Cropper}, Mark and {de Rham}, Claudia and {Dietrich}, J{\"o}rg P. and
         {Di Porto}, Cinzia and {Durrer}, Ruth and {Ealet}, Anne and
         {Ferreira}, Pedro G. and {Finelli}, Fabio and
         {Garc{\'\i}a-Bellido}, Juan and {Giannantonio}, Tommaso and
         {Guzzo}, Luigi and {Heavens}, Alan and {Heisenberg}, Lavinia and
         {Heymans}, Catherine and {Hoekstra}, Henk and {Hollenstein}, Lukas and
         {Holmes}, Rory and {Hwang}, Zhiqi and {Jahnke}, Knud and
         {Kitching}, Thomas D. and {Koivisto}, Tomi and {Kunz}, Martin and
         {La Vacca}, Giuseppe and {Linder}, Eric and {March}, Marisa and
         {Marra}, Valerio and {Martins}, Carlos and {Majerotto}, Elisabetta and
         {Markovic}, Dida and {Marsh}, David and {Marulli}, Federico and
         {Massey}, Richard and {Mellier}, Yannick and {Montanari}, Francesco and
         {Mota}, David F. and {Nunes}, Nelson J. and {Percival}, Will and
         {Pettorino}, Valeria and {Porciani}, Cristiano and
         {Quercellini}, Claudia and {Read}, Justin and {Rinaldi}, Massimiliano and
         {Sapone}, Domenico and {Sawicki}, Ignacy and {Scaramella}, Roberto and
         {Skordis}, Constantinos and {Simpson}, Fergus and {Taylor}, Andy and
         {Thomas}, Shaun and {Trotta}, Roberto and {Verde}, Licia and
         {Vernizzi}, Filippo and {Vollmer}, Adrian and {Wang}, Yun and
         {Weller}, Jochen and {Zlosnik}, Tom},
        title = "{Cosmology and fundamental physics with the Euclid satellite}",
      journal = {Living Reviews in Relativity},
         year = 2018,
        month = apr,
       volume = {21},
       number = {1},
          eid = {2},
        pages = {2},
          doi = {10.1007/s41114-017-0010-3},
archivePrefix = {arXiv},
       eprint = {1606.00180},
 primaryClass = {astro-ph.CO},
       adsurl = {https://ui.adsabs.harvard.edu/abs/2018LRR....21....2A}
}

@article{Salvatier2016,
  doi = {10.7717/peerj-cs.55},
  url = {https://doi.org/10.7717/peerj-cs.55},
  year  = {2016},
  month = {apr},
  publisher = {{PeerJ}},
  volume = {2},
  pages = {e55},
  author = {John Salvatier and Thomas V. Wiecki and Christopher Fonnesbeck},
  title = {Probabilistic programming in Python using {PyMC}3},
  journal = {{PeerJ} Computer Science}
}

@inproceedings{Fox1997SamplingCI,
  title={Sampling Conductivity Images via {MCMC}},
  author={Colin Fox and Geoff Nicholls},
  booktitle={The Art and Science of {B}ayesian Image Analysis},
  pages = {91--100},
  year={1997}
}

@inproceedings{MLMC,
author = {Heinrich, Stefan},
title = {Multilevel {M}onte {C}arlo Methods},
year = {2001},
isbn = {3540430431},
publisher = {Springer-Verlag},
address = {Berlin, Heidelberg},
booktitle = {Proceedings of the Third International Conference on Large-Scale Scientific Computing-Revised Papers},
pages = {58–67},
numpages = {10},
series = {LSSC '01}
}

@article{Giles-MLMC,
 ISSN = {0030364X, 15265463},
 URL = {http://www.jstor.org/stable/25147215},
 author = {Michael B. Giles},
 journal = {Operations Research},
 number = {3},
 pages = {607--617},
 publisher = {INFORMS},
 title = {Multilevel {M}onte {C}arlo Path Simulation},
 urldate = {2022-08-12},
 volume = {56},
 year = {2008}
}

@article{KAIPIO2007493,
title = {Statistical inverse problems: Discretization, model reduction and inverse crimes},
journal = {Journal of Computational and Applied Mathematics},
volume = {198},
number = {2},
pages = {493-504},
year = {2007},
note = {Special Issue: Applied Computational Inverse Problems},
issn = {0377-0427},
doi = {https://doi.org/10.1016/j.cam.2005.09.027},
url = {https://www.sciencedirect.com/science/article/pii/S0377042705007296},
author = {Jari Kaipio and Erkki Somersalo}
}

@article{AEM-summary,
 title = "A posteriori stochastic correction of reduced models in delayed-acceptance {MCMC}, with application to multiphase subsurface inverse problems",
author = "Tiangang Cui and Colin Fox and O'Sullivan, {Michael J.}",
year = "2019",
month = jun,
day = "8",
doi = "10.1002/nme.6028",
language = "English",
volume = "118",
pages = "578--605",
journal = "International Journal for Numerical Methods in Engineering",
issn = "0029-5981",
publisher = "Wiley-Blackwell",
number = "10",

}

@article{Tierney1999SomeAM,
  title={Some adaptive {M}onte {C}arlo methods for {B}ayesian inference.},
  author={Luke Tierney and Antonietta Mira},
  journal={Statistics in medicine},
  year={1999},
  volume={18 17-18},
  pages={
          2507-15
        }
}

@article{reversiblejumpreject,
 ISSN = {00063444},
 URL = {http://www.jstor.org/stable/2673700},
 author = {Peter J. Green and Antonietta Mira},
 journal = {Biometrika},
 number = {4},
 pages = {1035--1053},
 publisher = {[Oxford University Press, Biometrika Trust]},
 title = {Delayed Rejection in Reversible Jump {M}etropolis-{H}astings},
 urldate = {2022-08-12},
 volume = {88},
 year = {2001}
}

@article{multifidelitysurvey,
title = "Survey of multifidelity methods in uncertainty propagation, inference, and optimization",
author = "Benjamin Peherstorfer and Karen Willcox and Max Gunzburger",
year = "2018",
doi = "10.1137/16M1082469",
language = "English (US)",
volume = "60",
pages = "550--591",
journal = "SIAM Review",
issn = "0036-1445",
publisher = "Society for Industrial and Applied Mathematics Publications",
number = "3",

}

@book{Hairer,
  address = {Berlin},
  author = {Hairer, E. and N{\o}rsett, S.P. and Wanner, G.},
  edition = {Second},
  publisher = {Springer},
  title = {Solving Ordinary Differential 
 Equations {I} Nonstiff problems},
  year = 2000
}

@article{Roberts_2004,
	doi = {10.1214/154957804100000024},
  
	url = {https://doi.org/10.1214%2F154957804100000024},
  
	year = 2004,
	month = {jan},
  
	publisher = {Institute of Mathematical Statistics},
  
	volume = {1},
  
	number = {none},
  
	author = {Gareth O. Roberts and Jeffrey S. Rosenthal},
  
	title = {General state space {M}arkov chains and {MCMC} algorithms},
  
	journal = {Probability Surveys}
}

@article{Ivanov_2020,
	doi = {10.1088/1475-7516/2020/05/042},
  
	url = {https://doi.org/10.1088%2F1475-7516%2F2020%2F05%2F042},
  
	year = 2020,
	month = {may},
  
	publisher = {{IOP} Publishing},
  
	volume = {2020},
  
	number = {05},
  
	pages = {042--042},
  
	author = {Mikhail M. Ivanov and Marko Simonovi{\'{c}
} and Matias Zaldarriaga},
  
	title = {Cosmological parameters from the {BOSS} galaxy power spectrum},
  
	journal = {Journal of Cosmology and Astroparticle Physics}
}

@article{plank_2020,
	doi = {10.1051/0004-6361/201833910},
  
	url = {https://doi.org/10.1051%2F0004-6361%2F201833910},
  
	year = 2020,
	month = {sep},
  
	publisher = {{EDP} Sciences},
  
	volume = {641},
  
	pages = {A6},
    author = {N. Aghanim and Y. Akrami and M. Ashdown and J. Aumont and C. Baccigalupi and others},
	fullauthor = { N. Aghanim and Y. Akrami and M. Ashdown and J. Aumont and C. Baccigalupi and M. Ballardini and A. J. Banday and R. B. Barreiro and N. Bartolo and S. Basak and R. Battye and K. Benabed and J.-P. Bernard and M. Bersanelli and P. Bielewicz and J. J. Bock and J. R. Bond and J. Borrill and F. R. Bouchet and F. Boulanger and M. Bucher and C. Burigana and R. C. Butler and E. Calabrese and J.-F. Cardoso and J. Carron and A. Challinor and H. C. Chiang and J. Chluba and L. P. L. Colombo and C. Combet and D. Contreras and B. P. Crill and F. Cuttaia and P. de Bernardis and G. de Zotti and J. Delabrouille and J.-M. Delouis and E. Di Valentino and J. M. Diego and O. Dor{\'{e}
} and M. Douspis and A. Ducout and X. Dupac and S. Dusini and G. Efstathiou and F. Elsner and T. A. En{\ss}lin and H. K. Eriksen and Y. Fantaye and M. Farhang and J. Fergusson and R. Fernandez-Cobos and F. Finelli and F. Forastieri and M. Frailis and A. A. Fraisse and E. Franceschi and A. Frolov and S. Galeotta and S. Galli and K. Ganga and R. T. G{\'{e}}nova-Santos and M. Gerbino and T. Ghosh and J. Gonz{\'{a}}lez-Nuevo and K. M. G{\'{o}}rski and S. Gratton and A. Gruppuso and J. E. Gudmundsson and J. Hamann and W. Handley and F. K. Hansen and D. Herranz and S. R. Hildebrandt and E. Hivon and Z. Huang and A. H. Jaffe and W. C. Jones and A. Karakci and E. Keihänen and R. Keskitalo and K. Kiiveri and J. Kim and T. S. Kisner and L. Knox and N. Krachmalnicoff and M. Kunz and H. Kurki-Suonio and G. Lagache and J.-M. Lamarre and A. Lasenby and M. Lattanzi and C. R. Lawrence and M. Le Jeune and P. Lemos and J. Lesgourgues and F. Levrier and A. Lewis and M. Liguori and P. B. Lilje and M. Lilley and V. Lindholm and M. L{\'{o}}pez-Caniego and P. M. Lubin and Y.-Z. Ma and J. F. Mac{\'{\i}}as-P{\'{e}}rez and G. Maggio and D. Maino and N. Mandolesi and A. Mangilli and A. Marcos-Caballero and M. Maris and P. G. Martin and M. Martinelli and E. Mart{\'{\i}}nez-Gonz{\'{a}}lez and S. Matarrese and N. Mauri and J. D. McEwen and P. R. Meinhold and A. Melchiorri and A. Mennella and M. Migliaccio and M. Millea and S. Mitra and M.-A. Miville-Desch{\^{e}}nes and D. Molinari and L. Montier and G. Morgante and A. Moss and P. Natoli and H. U. N{\o}rgaard-Nielsen and L. Pagano and D. Paoletti and B. Partridge and G. Patanchon and H. V. Peiris and F. Perrotta and V. Pettorino and F. Piacentini and L. Polastri and G. Polenta and J.-L. Puget and J. P. Rachen and M. Reinecke and M. Remazeilles and A. Renzi and G. Rocha and C. Rosset and G. Roudier and J. A. Rubi{\~{n}}o-Mart{\'{\i}}n and B. Ruiz-Granados and L. Salvati and M. Sandri and M. Savelainen and D. Scott and E. P. S. Shellard and C. Sirignano and G. Sirri and L. D. Spencer and R. Sunyaev and A.-S. Suur-Uski and J. A. Tauber and D. Tavagnacco and M. Tenti and L. Toffolatti and M. Tomasi and T. Trombetti and L. Valenziano and J. Valiviita and B. Van Tent and L. Vibert and P. Vielva and F. Villa and N. Vittorio and B. D. Wandelt and I. K. Wehus and M. White and S. D. M. White and A. Zacchei and A. Zonca},
  
	problematictitle = {$\less$i$\greater$Planck$\less$/i$\greater$2018 results},
	title = {{Planck} 2018 results},
  
	problematicjournal = {Astronomy {\&}amp$\mathsemicolon$ Astrophysics},
	journal = {Astronomy \& Astrophysics}
}

@article{bj/1080222083,
author = {Heikki Haario and Eero Saksman and Johanna Tamminen},
title = {An adaptive {M}etropolis algorithm},
volume = {7},
journal = {Bernoulli},
number = {2},
publisher = {Bernoulli Society for Mathematical Statistics and Probability},
pages = {223 -- 242},
year = {2001},
}

@INCOLLECTION{Gelman1996,
  author = {Gelman, A. and Roberts, G. O. and Gilks, W. R.},
  title = {Efficient {M}etropolis jumping rules},
  booktitle = {Bayesian Statistics},
  publisher = {Oxford University Press, Oxford},
  year = {1996},
  editor = {Bernardo, J. M. and Berger, J. O. and Dawid, A. P. and Smith, A.
	F. M.},
  pages = {599-608}
}

@article{hinton2002training,
  title={Training products of experts by minimizing contrastive divergence},
  author={Hinton, Geoffrey E},
  journal={Neural computation},
  volume={14},
  number={8},
  pages={1771--1800},
  year={2002},
  publisher={MIT Press}
}

@InProceedings{pmlr-v9-sutskever10a,
  title = 	 {On the Convergence Properties of Contrastive Divergence},
  author = 	 {Sutskever, Ilya and Tieleman, Tijmen},
  booktitle = 	 {Proceedings of the Thirteenth International Conference on Artificial Intelligence and Statistics},
  pages = 	 {789--795},
  year = 	 {2010},
  editor = 	 {Teh, Yee Whye and Titterington, Mike},
  volume = 	 {9},
  series = 	 {Proceedings of Machine Learning Research},
  address = 	 {Chia Laguna Resort, Sardinia, Italy},
  month = 	 {13--15 May},
  publisher =    {PMLR},
  url = 	 {https://proceedings.mlr.press/v9/sutskever10a.html}
}

@inproceedings{NEURIPS2022_8803b9ae,
 author = {Cai, Diana and Adams, Ryan P},
 booktitle = {Advances in Neural Information Processing Systems},
 editor = {S. Koyejo and S. Mohamed and A. Agarwal and D. Belgrave and K. Cho and A. Oh},
 pages = {21654--21667},
 publisher = {Curran Associates, Inc.},
 title = {Multi-fidelity {M}onte {C}arlo: a pseudo-marginal approach},
 url = {https://proceedings.neurips.cc/paper_files/paper/2022/file/8803b9ae0b13011f28e6dd57da2ebbd8-Paper-Conference.pdf},
 volume = {35},
 year = {2022}
}

@article{Marinari,
author="E. Marinari and G. Parisi",
title="Simulated Tempering: A New {M}onte {C}arlo Scheme",
journal="Europhysics Letters (EPL)",
ISSN="0295-5075",
publisher="IOP Publishing",
year="1992",
month="07",
volume="19",
number="6",
pages="451-458",
DOI="10.1209/0295-5075/19/6/002",
URL="https://cir.nii.ac.jp/crid/1363670319736685312"
}

@article{MCSimulation,
  title = {Replica Monte Carlo Simulation of Spin-Glasses},
  author = {Swendsen, Robert H. and Wang, Jian-Sheng},
  journal = {Phys. Rev. Lett.},
  volume = {57},
  issue = {21},
  pages = {2607--2609},
  numpages = {0},
  year = {1986},
  month = {Nov},
  publisher = {American Physical Society},
  doi = {10.1103/PhysRevLett.57.2607},
  url = {https://link.aps.org/doi/10.1103/PhysRevLett.57.2607}
}

@article{Altekar_Dwarkadas_Huelsenbeck_Ronquist_2004, title={Parallel Metropolis Coupled Markov chain Monte Carlo for bayesian phylogenetic inference}, volume={20}, DOI={10.1093/bioinformatics/btg427}, number={3}, journal={Bioinformatics}, author={Altekar, Gautam and Dwarkadas, Sandhya and Huelsenbeck, John P. and Ronquist, Fredrik}, year={2004}, month={Jan}, pages={407–415}}

@article{10.1093/biomet/82.4.711,
    author = {Green, Peter J.},
    title = "{Reversible jump Markov chain Monte Carlo computation and Bayesian model determination}",
    journal = {Biometrika},
    volume = {82},
    number = {4},
    pages = {711-732},
    year = {1995},
    month = {12},
    issn = {0006-3444},
    doi = {10.1093/biomet/82.4.711},
    url = {https://doi.org/10.1093/biomet/82.4.711},
    eprint = {https://academic.oup.com/biomet/article-pdf/82/4/711/699533/82-4-711.pdf},
}

@article{Al-Awadhi,
author = {Al-Awadhi, Fahimah and Hurn, Merrilee and Jennison, Christopher},
year = {2004},
month = {08},
pages = {189-198},
title = {Improving the acceptance rate of reversible jump MCMC proposals},
volume = {69},
journal = {Statistics \& Probability Letters},
doi = {10.1016/j.spl.2004.06.025}
}

@article{smc,
author = {Liu, Jun and Chen, Rong},
year = {1998},
month = {04},
pages = {},
title = {Sequential Monte Carlo Methods for Dynamic Systems},
volume = {93},
journal = {Journal of the American Statistical Association},
doi = {10.1080/01621459.1998.10473765}
}

@Inbook{Doucet2001,
author="Doucet, Arnaud
and de Freitas, Nando
and Gordon, Neil",
editor="Doucet, Arnaud
and de Freitas, Nando
and Gordon, Neil",
title="An Introduction to Sequential Monte Carlo Methods",
bookTitle="Sequential Monte Carlo Methods in Practice",
year="2001",
publisher="Springer New York",
address="New York, NY",
pages="3--14",
isbn="978-1-4757-3437-9",
doi="10.1007/978-1-4757-3437-9_1",
url="https://doi.org/10.1007/978-1-4757-3437-9_1"
}

@article{diminishing-adap-ergodicity,
 ISSN = {00219002},
 URL = {http://www.jstor.org/stable/27595854},
 author = {Gareth O. Roberts and Jeffrey S. Rosenthal},
 journal = {Journal of Applied Probability},
 number = {2},
 pages = {458--475},
 publisher = {Applied Probability Trust},
 title = {Coupling and Ergodicity of Adaptive Markov Chain Monte Carlo Algorithms},
 urldate = {2024-10-10},
 volume = {44},
 year = {2007}
}

@article{Abdalla_2022,
   title={Cosmology intertwined: A review of the particle physics, astrophysics, and cosmology associated with the cosmological tensions and anomalies},
   volume={34},
   ISSN={2214-4048},
   url={http://dx.doi.org/10.1016/j.jheap.2022.04.002},
   DOI={10.1016/j.jheap.2022.04.002},
   journal={Journal of High Energy Astrophysics},
   publisher={Elsevier BV},
author = {Abdalla, Elcio and Abell{\'a}n, Guillermo Franco and Aboubrahim, Amin and Agnello, Adriano and Akarsu, {\"O}zg{\"u}r and others},
   fullauthor={Abdalla, Elcio and Abellán, Guillermo Franco and Aboubrahim, Amin and Agnello, Adriano and Akarsu, Özgür and Akrami, Yashar and Alestas, George and Aloni, Daniel and Amendola, Luca and Anchordoqui, Luis A. and Anderson, Richard I. and Arendse, Nikki and Asgari, Marika and Ballardini, Mario and Barger, Vernon and Basilakos, Spyros and Batista, Ronaldo C. and Battistelli, Elia S. and Battye, Richard and Benetti, Micol and Benisty, David and Berlin, Asher and de Bernardis, Paolo and Berti, Emanuele and Bidenko, Bohdan and Birrer, Simon and Blakeslee, John P. and Boddy, Kimberly K. and Bom, Clecio R. and Bonilla, Alexander and Borghi, Nicola and Bouchet, François R. and Braglia, Matteo and Buchert, Thomas and Buckley-Geer, Elizabeth and Calabrese, Erminia and Caldwell, Robert R. and Camarena, David and Capozziello, Salvatore and Casertano, Stefano and Chen, Geoff C.-F. and Chluba, Jens and Chen, Angela and Chen, Hsin-Yu and Chudaykin, Anton and Cicoli, Michele and Copi, Craig J. and Courbin, Fred and Cyr-Racine, Francis-Yan and Czerny, Bożena and Dainotti, Maria and D’Amico, Guido and Davis, Anne-Christine and de Cruz Pérez, Javier and de Haro, Jaume and Delabrouille, Jacques and Denton, Peter B. and Dhawan, Suhail and Dienes, Keith R. and Di Valentino, Eleonora and Du, Pu and Eckert, Dominique and Escamilla-Rivera, Celia and Ferté, Agnès and Finelli, Fabio and Fosalba, Pablo and Freedman, Wendy L. and Frusciante, Noemi and Gaztañaga, Enrique and Giarè, William and Giusarma, Elena and Gómez-Valent, Adrià and Handley, Will and Harrison, Ian and Hart, Luke and Hazra, Dhiraj Kumar and Heavens, Alan and Heinesen, Asta and Hildebrandt, Hendrik and Hill, J. Colin and Hogg, Natalie B. and Holz, Daniel E. and Hooper, Deanna C. and Hosseininejad, Nikoo and Huterer, Dragan and Ishak, Mustapha and Ivanov, Mikhail M. and Jaffe, Andrew H. and Jang, In Sung and Jedamzik, Karsten and Jimenez, Raul and Joseph, Melissa and Joudaki, Shahab and Kamionkowski, Marc and Karwal, Tanvi and Kazantzidis, Lavrentios and Keeley, Ryan E. and Klasen, Michael and Komatsu, Eiichiro and Koopmans, Léon V.E. and Kumar, Suresh and Lamagna, Luca and Lazkoz, Ruth and Lee, Chung-Chi and Lesgourgues, Julien and Levi Said, Jackson and Lewis, Tiffany R. and L’Huillier, Benjamin and Lucca, Matteo and Maartens, Roy and Macri, Lucas M. and Marfatia, Danny and Marra, Valerio and Martins, Carlos J.A.P. and Masi, Silvia and Matarrese, Sabino and Mazumdar, Arindam and Melchiorri, Alessandro and Mena, Olga and Mersini-Houghton, Laura and Mertens, James and Milaković, Dinko and Minami, Yuto and Miranda, Vivian and Moreno-Pulido, Cristian and Moresco, Michele and Mota, David F. and Mottola, Emil and Mozzon, Simone and Muir, Jessica and Mukherjee, Ankan and Mukherjee, Suvodip and Naselsky, Pavel and Nath, Pran and Nesseris, Savvas and Niedermann, Florian and Notari, Alessio and Nunes, Rafael C. and Ó Colgáin, Eoin and Owens, Kayla A. and Özülker, Emre and Pace, Francesco and Paliathanasis, Andronikos and Palmese, Antonella and Pan, Supriya and Paoletti, Daniela and Perez Bergliaffa, Santiago E. and Perivolaropoulos, Leandros and Pesce, Dominic W. and Pettorino, Valeria and Philcox, Oliver H.E. and Pogosian, Levon and Poulin, Vivian and Poulot, Gaspard and Raveri, Marco and Reid, Mark J. and Renzi, Fabrizio and Riess, Adam G. and Sabla, Vivian I. and Salucci, Paolo and Salzano, Vincenzo and Saridakis, Emmanuel N. and Sathyaprakash, Bangalore S. and Schmaltz, Martin and Schöneberg, Nils and Scolnic, Dan and Sen, Anjan A. and Sehgal, Neelima and Shafieloo, Arman and Sheikh-Jabbari, M.M. and Silk, Joseph and Silvestri, Alessandra and Skara, Foteini and Sloth, Martin S. and Soares-Santos, Marcelle and Solà Peracaula, Joan and Songsheng, Yu-Yang and Soriano, Jorge F. and Staicova, Denitsa and Starkman, Glenn D. and Szapudi, István and Teixeira, Elsa M. and Thomas, Brooks and Treu, Tommaso and Trott, Emery and van de Bruck, Carsten and Vazquez, J. Alberto and Verde, Licia and Visinelli, Luca and Wang, Deng and Wang, Jian-Min and Wang, Shao-Jiang and Watkins, Richard and Watson, Scott and Webb, John K. and Weiner, Neal and Weltman, Amanda and Witte, Samuel J. and Wojtak, Radosław and Yadav, Anil Kumar and Yang, Weiqiang and Zhao, Gong-Bo and Zumalacárregui, Miguel},
   year={2022},
   month=jun, pages={49–211} }

@article{doi:10.1137/22M1476770,
author = {Lykkegaard, M. B. and Dodwell, T. J. and Fox, C. and Mingas, G. and Scheichl, R.},
title = {Multilevel Delayed Acceptance MCMC},
journal = {SIAM/ASA Journal on Uncertainty Quantification},
volume = {11},
number = {1},
pages = {1-30},
year = {2023},
doi = {10.1137/22M1476770},

URL = { 
    
        https://doi.org/10.1137/22M1476770
    
    

},
eprint = { 
    
        https://doi.org/10.1137/22M1476770
    
    

}
}

@article{Conrad_2016,
   title={Accelerating Asymptotically Exact MCMC for Computationally Intensive Models via Local Approximations},
   volume={111},
   ISSN={1537-274X},
   url={http://dx.doi.org/10.1080/01621459.2015.1096787},
   DOI={10.1080/01621459.2015.1096787},
   number={516},
   journal={Journal of the American Statistical Association},
   publisher={Informa UK Limited},
   author={Conrad, Patrick R. and Marzouk, Youssef M. and Pillai, Natesh S. and Smith, Aaron},
   year={2016},
   month=oct, pages={1591–1607} }

@ARTICLE{978393,
  author={Higdon, D. and Lee, H. and Zhuoxin Bi},
  journal={IEEE Transactions on Signal Processing}, 
  title={A Bayesian approach to characterizing uncertainty in inverse problems using coarse and fine-scale information}, 
  year={2002},
  volume={50},
  number={2},
  pages={389-399},
  doi={10.1109/78.978393}}

@article{doi:10.1137/050628568,
author = {Efendiev, Y. and Hou, T. and Luo, W.},
title = {Preconditioning Markov Chain Monte Carlo Simulations Using Coarse-Scale Models},
journal = {SIAM Journal on Scientific Computing},
volume = {28},
number = {2},
pages = {776-803},
year = {2006},
doi = {10.1137/050628568},

URL = { 
    
        https://doi.org/10.1137/050628568
    
    

},
eprint = { 
    
        https://doi.org/10.1137/050628568
    
    

}
}

@book{Strasser+1985,
url = {https://doi.org/10.1515/9783110850826},
title = {Mathematical Theory of Statistics},
title = {Statistical Experiments and Asymptotic Decision Theory},
author = {Helmut Strasser},
publisher = {De Gruyter},
address = {Berlin, New York},
doi = {doi:10.1515/9783110850826},
isbn = {9783110850826},
year = {1985},
lastchecked = {2025-07-21}
}

@article{hoang2013complexity,
  title={Complexity analysis of accelerated MCMC methods for Bayesian inversion},
  author={Hoang, Viet Ha and Schwab, Christoph and Stuart, Andrew M},
  journal={Inverse Problems},
  volume={29},
  number={8},
  pages={085010},
  year={2013},
  publisher={IOP Publishing}
}

@article{beskos2017multilevel,
  title={Multilevel sequential monte carlo samplers},
  author={Beskos, Alexandros and Jasra, Ajay and Law, Kody and Tempone, Raul and Zhou, Yan},
  journal={Stochastic Processes and their Applications},
  volume={127},
  number={5},
  pages={1417--1440},
  year={2017},
  publisher={Elsevier}
}

@article{jasra2017multilevel,
  title={Multilevel particle filters},
  author={Jasra, Ajay and Kamatani, Kengo and Law, Kody JH and Zhou, Yan},
  journal={SIAM Journal on Numerical Analysis},
  volume={55},
  number={6},
  pages={3068--3096},
  year={2017},
  publisher={SIAM}
}

@article{hoel2016multilevel,
  title={Multilevel ensemble Kalman filtering},
  author={Hoel, H{\aa}kon and Law, Kody JH and Tempone, Raul},
  journal={SIAM Journal on Numerical Analysis},
  volume={54},
  number={3},
  pages={1813--1839},
  year={2016},
  publisher={SIAM}
}

@article{gregory2017seamless,
  title={A seamless multilevel ensemble transform particle filter},
  author={Gregory, Alastair and Cotter, Colin J},
  journal={SIAM Journal on Scientific Computing},
  volume={39},
  number={6},
  pages={A2684--A2701},
  year={2017},
  publisher={SIAM}
}

@article{gregory2016multilevel,
  title={Multilevel ensemble transform particle filtering},
  author={Gregory, Alastair and Cotter, Colin J and Reich, Sebastian},
  journal={SIAM Journal on Scientific Computing},
  volume={38},
  number={3},
  pages={A1317--A1338},
  year={2016},
  publisher={SIAM}
}

@article{jasra2018bayesian,
  title={Bayesian static parameter estimation for partially observed diffusions via multilevel Monte Carlo},
  author={Jasra, Ajay and Kamatani, Kengo and Law, Kody and Zhou, Yan},
  journal={SIAM Journal on Scientific Computing},
  volume={40},
  number={2},
  pages={A887--A902},
  year={2018},
  publisher={SIAM}
}

@article{jasra2018multi,
  title={A multi-index Markov chain Monte Carlo method},
  author={Jasra, Ajay and Kamatani, Kengo and Law, Kody JH and Zhou, Yan},
  journal={International Journal for Uncertainty Quantification},
  volume={8},
  number={1},
  year={2018},
  publisher={Begel House Inc.}
}

@article{haji2016multi,
  title={Multi-index stochastic collocation convergence rates for random PDEs with parametric regularity},
  author={Haji-Ali, Abdul-Lateef and Nobile, Fabio and Tamellini, Lorenzo and Tempone, Ra{\'u}l},
  journal={Foundations of Computational Mathematics},
  volume={16},
  number={6},
  pages={1555--1605},
  year={2016},
  publisher={Springer}
}

\end{document}